\documentclass[11pt]{amsart}
\usepackage{natbib}
\usepackage{placeins}
\usepackage[utf8]{inputenc} 
\usepackage[T1]{fontenc}    
\usepackage{hyperref}       
\usepackage{url}            
\usepackage{booktabs}       
\usepackage{amsfonts}       
\usepackage{amsthm}
\usepackage{amsmath}
\usepackage{amssymb}
\usepackage{nicefrac}       
\usepackage{multirow}
\usepackage{microtype}      
\usepackage{xcolor}         
\usepackage{algorithm}
\usepackage{algpseudocode}

\usepackage{graphicx}
\usepackage{subcaption}
\usepackage[margin=1in]{geometry}
\usepackage{float}
\usepackage{bbm}
\usepackage{mathrsfs}
\usepackage{enumerate}
\usepackage{enumitem}
\usepackage{color}
\usepackage{dsfont}
\usepackage[dvipsnames]{xcolor}
\usepackage{hyperref}
\hypersetup{
  colorlinks   = true,
  urlcolor     = blue,
  linkcolor    = blue,
  citecolor   = ForestGreen
}
\setlist[itemize]{leftmargin=25pt, topsep=0pt, itemsep=5pt}
\setlist[enumerate]{label=\textbf{(\arabic*)}, leftmargin=25pt, topsep=0pt, itemsep=5pt}

\title{Detecting Metastable Basins in High Dimensions via Marginal Trajectory Distribution Discrimination}

\author{%
  Taj Jones-McCormick \\
  \texttt{tejonesm@uwaterloo.ca} \\
}

\newtheorem{theorem}{Theorem}
\newtheorem{definition}[theorem]{Definition} 

\begin{document}

\maketitle

\begin{abstract}
    We study the problem of identifying dynamically distinct basins of attraction in high dimensional time-homogeneous Markov processes using only trajectory sampling. This problem is fundamental in the analysis of metastable dynamical systems, where the process rapidly mixes within basins while transitions between basins occur rarely on the timescale of interest, or even when the state space is reducible. Existing approaches typically rely on spatial discretization or spectral analysis of estimated transition operators, which can become unreliable in high dimensional settings or when the underlying basin geometry is highly nonlinear. 
    We propose a discriminative approach to basin identification based on marginal trajectory distribution comparison. We prove a simple risk separation result: if two initial states belong to the same basin, the Bayes-optimal classifier attempting to distinguish their marginal trajectory distributions achieves risk close to 1/2, whereas if they lie in distinct basins, the optimal risk is close to zero. This observation reduces basin detection to a two-sample discrimination problem between marginal trajectory distributions. Motivated by this principle, we develop a neural algorithm that receives a set of candidate basin representatives and iteratively merges them by estimating classification risk with a neural network that approximates the Bayes classifier. We evaluate the method on various metastable systems. These include synthetic systems constructed by embedding low-dimensional dynamics into high dimensional noisy ambient spaces. In these settings, standard spectral and clustering-based methods often fail, while our approach accurately recovers the underlying basin structure. These results highlight a shortcoming of existing methods and highlight trajectory discrimination as an effective and scalable tool for identifying dynamical basins in high dimensional stochastic systems.
\end{abstract}

\section{Introduction}
high dimensional stochastic systems often exhibit multi-modal structure: trajectories concentrate in localized regions of state space and transition only rarely between them. Such behavior arises in a wide range of applications, including Markov chain Monte Carlo methods for sampling multi-modal distributions, Langevin dynamics in rugged energy landscapes, molecular dynamics simulations, and other metastable systems \citep{brooks2011handbook, robert2004monte, Wales_2004, stoltz2010free}. In these settings, understanding the decomposition of the state space into components under which transitions are extremely rare or impossible, is a fundamental problem.

We study this problem in a setting where the state space is high dimensional, continuous and the dynamics are accessible only through trajectory simulation. Let $(X_t)_{t \ge 0}$ be a time-homogeneous Markov process on a continuous state space, $X_t \in \mathbb{R}^d$. We assume only black-box access to the process: given any initialization $x$, we can simulate trajectories forward in time. Our goal is to recover a partition of the state space into components (basins) such that, on a prescribed timescale $\tau$, trajectories starting within a component rarely transition to other components. This formulation also captures reducible processes that decompose into multiple closed communicating classes. 

A large body of existing work approaches this problem by constructing a discretized or low-dimensional representation of the dynamics and analyzing the estimated Markov transition matrix. In finite-state settings, classical techniques identify closed or nearly invariant sets using graph connectivity or spectral decompositions of the transition matrix \citep{norris1998markov, tarjan1972depth, seneta2006non}. In continuous systems, related ideas appear in transfer-operator methods such as diffusion maps \citep{coifman2006diffusion}, Markov state models (MSMs) \citep{prinz2011markov}, and variational approaches including VAMP and VAMPnet \citep{noe2013variational, mardt2018vampnets}. These methods approximate leading eigenfunctions of the Koopman \citep{koopman1931hamiltonian} or Perron–Frobenius operator \citep{farjoun2020laws}. While these approaches have proven powerful in low-dimensional settings, they rely critically on the ability to construct a discretization or representation that preserves the relevant dynamical structure. In high dimensional systems this requirement becomes problematic: direct discretization is infeasible and clustering or embedding based methods may not preserve or respect the geometry of the process's underlying energy landscape. Existing works identify this issue and attempt to bridge this gap \citep{perez2013identification, noe2015kinetic}. However, despite effort to remove reliance on Euclidean distance, these approaches still rely on a brittle discretization of the state space which is not generally sufficient for high dimensional continuous state spaces. Additionally, existing methods are mainly designed for regimes where cross-basin transitions are infrequent but reliably observable in collected trajectory data. In contrast, we do not assume access to data but rather the ability to simulate trajectories and we focus on the regime where cross-basin transitions are extremely rare and may not occur at all.

In this work we propose a different perspective that avoids explicit discretization and global operator estimation. Our starting point is a simple observation: if two initial states lie in the same metastable basin, the distributions of trajectories emanating from those states become nearly indistinguishable on the mixing timescale, whereas trajectories starting in different basins produce distributions that remain easily distinguishable. This observation reduces basin identification to a statistical discrimination problem between marginal trajectory distributions.

Building on this principle, we derive a simple risk separation result showing that the Bayes-optimal classifier distinguishing trajectories from two initializations achieves risk close to $1/2$ when the initializations lie in the same basin and risk close to zero when they lie in distinct basins. This result provides a practical criterion for testing whether two states belong to the same underlying component. Motivated by this characterization, we propose an algorithm that estimates this risk using neural network classifiers. Given a set of candidate basin representatives, we simulate short trajectories from each candidate and train neural classifiers to distinguish between them. Pairs of candidates that cannot be reliably distinguished are merged, producing a refined estimate of the basin partition. 

We evaluate our approach on synthetic high dimensional systems. We demonstrate, by embedding low-dimensional multi-modal dynamics into noisy ambient spaces, that standard spectral and clustering-based approaches fail to recover the correct basin structure in high dimensional, noisy state spaces. In contrast, the proposed discriminative method accurately identifies the underlying components. These experiments demonstrate that trajectory discrimination provides a robust and scalable tool for dynamical mode identification in high dimensional stochastic systems.

\subsection{Contributions}
The main contributions of this work are as follows:
\begin{itemize}

    \item We identify and address a fundamental failure mode of geometry-based basin identification: when dynamics live on low-dimensional manifolds embedded in high dimensional noise, Euclidean structure is misleading
    
    \item We introduce a discriminative framework (and algorithm) for metastable basin identification that reduces the problem to classification between marginal trajectory distributions, avoiding reliance on Euclidean geometry

    \item We empirically demonstrate that this discriminative approach successfully recovers basin structure in high dimensional systems where spectral, variational, and clustering-based baselines struggle.
\end{itemize}

\section{Related Work}

The problem of identifying dynamically coherent or weakly mixing components of a stochastic process has been studied under several names, including metastability, almost-invariant sets, and slow collective variables. Existing approaches broadly fall into two categories: methods based on spectral analysis of transfer operators and methods based on clustering of trajectory data or low-dimensional embeddings. Additionally, these approaches tend to focus on scenarios where transitions between metastable sets are rare but still occurring (in contrast to our set up where transitions may never occur).

In molecular dynamics and statistical physics, metastable states are commonly identified through Markov state models (MSMs) constructed from trajectory data \citep{bowman2013introduction, prinz2011markov}. These methods first discretize the state space using clustering to define microstates and then estimate a transition matrix between them. Spectral analysis of this transition matrix reveals slow dynamical modes, which can be used to identify metastable sets via Perron cluster analysis or related techniques \citep{deuflhard2000identification, sarich2010approximation}. 

More recent work has developed variational approaches to learning slow dynamical modes. The variational approach to conformational dynamics (VAC) and its generalization to nonreversible dynamics (VAMP) formulate the identification of slow modes as an optimization problem over function spaces \citep{noe2013variational, wu2017variational}. Neural network methods such as VAMPnets \citep{mardt2018vampnets} parameterize these functions using deep architectures and optimize a variational objective to approximate dominant singular functions of the Koopman operator. The resulting embeddings are typically interpreted through clustering or soft state assignments to construct coarse-grained dynamical models. Related techniques include time-lagged independent component analysis (TICA) \citep{perez2013identification} and diffusion maps \citep{coifman2006diffusion}. Linear classifiers have also been used to assign states to metastable sets after metastable set candidates are identified \citep{novelli2022characterizing}.

Beyond the molecular dynamics literature, the identification of invariant or almost-invariant sets has also been studied in dynamical systems theory using transfer operator methods and coherent set detection \citep{froyland2013analytic}. In finite-state Markov chains, reducible structure can be identified exactly via graph-theoretic analysis of the transition matrix. However, in high dimensional or continuous state spaces where the dynamics are accessible only through trajectory simulation, the state space is not explicitly enumerable and estimating transfer operators becomes significantly more challenging.

Clustering-based approaches provide an alternative strategy and are frequently used within MSM pipelines. Algorithms such as $k$-means \citep{mcqueen1967some}, spectral clustering \citep{ng2001spectral}, and density-based methods such as DBSCAN \citep{ester1996density} and HDBSCAN \citep{campello2015hierarchical} are often applied either directly to trajectory snapshots or to low-dimensional embeddings obtained from methods such as TICA \citep{perez2013identification}, diffusion maps \citep{coifman2006diffusion}, or variational Koopman embeddings \citep{wu2017variational}. However, clustering methods operate primarily on geometric structure in the data and do not explicitly account for dynamical connectivity. As a result, geometrically separated regions may mix rapidly, while dynamically separated regions may be close in ambient space.

In contrast, our method avoids explicit discretization of the state space and learns embeddings whose interpretation as disjoint dynamical components is indirect and does not require learning global spectral structure of the dynamics. Instead, we adopt a discriminative perspective in which basin identification is reduced to comparing marginal trajectory distributions originating from different initializations. This perspective is closely related to classifier-based two-sample testing and provides a simple criterion for determining whether two states belong to the same dynamical component.

\section{Problem Set Up and Theoretical Motivation}

We consider the problem of identifying dynamically distinct basins of attraction in a Markov process using only trajectory simulations.

Let $(\mathcal{X},\mathcal{B})$ be a measurable state space and let $\{X_t\}_{t\ge0}$ be a time-homogeneous Markov process with transition kernel
\[
P_t(x,A) := \mathbb{P}(X_t \in A \mid X_0 = x),
\quad x\in\mathcal{X},\; A\in\mathcal{B}.
\]

For a measurable set $A\subset\mathcal{X}$, define the first exit time
\[
\tau_A := \inf\{t>0 : X_t \notin A\}.
\]

Our goal is to identify regions of the state space that are dynamically stable on a prescribed time scale. We formalize this notion through the concept of a basin partition.

\begin{definition}[Basin Partition]

Let $P_t$ be a Markov transition kernel.  
A \emph{basin partition} with parameters $(T,\delta)$ consists of

\begin{itemize}
\item a finite collection of measurable sets $W=\{w_1,\dots,w_k\}$ (wells/basins),
\item a corresponding collection of measurable subsets $C=\{c_1,\dots,c_k\}$ (cores),
\end{itemize}

such that:

\begin{enumerate}
\item the wells form a disjoint partition of the state space
\[
\mathcal{X} = \bigcup_{i=1}^K w_i,
\quad w_i \cap w_j = \emptyset \;\; (i\neq j),
\]

\item each core lies strictly inside its well
\[
c_i \subset w_i,
\]

\item trajectories starting from any core rarely leave the corresponding well within time $T$.
\[
\sup_{x\in c_i}
\mathbb{P}(\tau_{w_i} < T \mid X_0=x)
\le \delta \,\,\,\forall \,\,i\in \{1,2,...,k\}.
\]

\end{enumerate}

\end{definition}

This definition captures the metastable structure commonly observed in high dimensional stochastic systems: trajectories initialized within a basin (and away from the boundary, i.e. within a 'core' subset) remain there with high probability for long periods of time.

Given black-box access to the transition kernel $P_t$ and a finite simulation horizon $T$, our goal is to recover such a basin partition.

The previous definition ensures that trajectories rarely leave a basin. We additionally assume that trajectories mix within each basin on a shorter time scale.

\begin{definition}[Conditional Mixing]

Let $w\subset\mathcal{X}$ be a Basin.  
We say that $P_t$ is \emph{$(\varepsilon,t^*)$-conditionally mixing} on $w$ if there exists a probability measure $\pi_w$ supported on $w$ such that

\[
\sup_{x\in c_w}
\left\|
P_t(\cdot \mid \tau_{w} > t, X_0=x)
- \pi_w
\right\|_{TV}
\le \varepsilon
\quad \text{for all } t \ge t^* .
\]

The measure $\pi_w$ is referred to as the quasi-stationary distribution on $w$.

\end{definition}

Where $\|\cdot\|_{TV}$ is the total variation norm. Assuming conditional mixing, trajectories initialized within the same basin become indistinguishable in distribution (on the conditional mixing timescale) when conditioned on remaining in the basin.

\subsection{A Classification Perspective}

The key observation underlying our method is that conditional mixing implies trajectories starting within the same basin quickly become statistically indistinguishable, whereas trajectories initialized in different basins remain separated in distribution due to the rarity of inter-basin transitions.

This suggests a natural statistical test based on classification.

Consider two processes $(X^{(0)}_t)_t$ and $(X^{(1)}_t)_t$ with transition kernel $P_t$, initialized at $X^{(0)}_0=x_0$ and $X^{(1)}_0=x_1$.  
Let $y \sim \mathrm{Bernoulli}(1/2)$ and observe the random pair
\[
(X^{(y)}_{t^*}, y).
\]

A classifier observing $X_{t^*}^{(y)}$ attempts to predict which initial state generated the trajectory.

\begin{definition}[Bayes Classifier and Risk]

The Bayes classifier associated with $(x_0,x_1)$ is defined as
\[
C^{\mathrm{Bayes},t^*}_{(x_0,x_1)}(x)
=
\arg\max_{l\in\{0,1\}}
\mathbb{P}(y=l \mid X^{(y)}_{t^*}=x).
\]

The risk of a classifier $C$ is defined as
\[
R(C)
=
\mathbb{P}\big(C(X^{(y)}_{t^*}) \neq y\big).
\]

\end{definition}

The Bayes classifier minimizes classification risk over all measurable classifiers.

The following theorem shows that classification risk separates states belonging to the same basin from those belonging to different basins.

\begin{theorem}[Basin Identifiability]\label{thm}

Let $(W,C)$ be a basin partition with parameters $(T,\delta)$.  
Assume each well $w\in W$ satisfies $(\varepsilon,t^*)$-conditional mixing for some $t^*<T$ uniform over $w\in W$.

Let $x_0,x_1 \in \bigcup_{w \in W} c_w$ and consider the Bayes classifier $C^{\mathrm{Bayes},t^*}_{(x_0,x_1)}$.

Then the following holds:

\begin{enumerate}

\item If $x_0,x_1 \in c_w$ for some $w \in W$, then
\[
R(C^{\mathrm{Bayes},t^*}_{(x_0,x_1)})
\ge
(1-\delta)\left(\frac{1}{2}-\varepsilon\right).
\]

\item If $x_0 \in c_w$ and $x_1 \in c_{w'}$ with $w \neq w'$, then
\[
R(C^{\mathrm{Bayes},t^*}_{(x_0,x_1)})
\le
\delta.
\]

\end{enumerate}

\end{theorem}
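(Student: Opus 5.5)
The plan rests on the standard identity for the Bayes risk with equal priors. Let $\mu_0 = P_{t^*}(x_0,\cdot)$ and $\mu_1 = P_{t^*}(x_1,\cdot)$ be the laws of $X^{(0)}_{t^*}$ and $X^{(1)}_{t^*}$. Then
\[
R(C^{\mathrm{Bayes},t^*}_{(x_0,x_1)}) = \tfrac12\bigl(1-\|\mu_0-\mu_1\|_{TV}\bigr),
\]
with $\|\cdot\|_{TV}$ the sup over events of $|\mu_0(A)-\mu_1(A)|$. I would prove this quickly by writing the risk as $\tfrac12\int \min(d\mu_0,d\mu_1)$ with respect to a dominating measure. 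This identity reduces both parts of Theorem~\ref{thm} to bounds on $\|\mu_0-\mu_1\|_{TV}$. Throughout I use that $t^*<T$, so by item 3 of the basin partition definition $\mathbb{P}(\tau_w\le t^*\mid X_0=x)\le\delta$ for $x\in c_w$. The difference between $<T$ and $\le t^*$ is harmless since $t^*<T$.

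\textbf{Part (1).} I decompose each law according to whether the trajectory has left the well:
\[
\mu_i = (1-p_i)\,\nu_i + p_i\,\rho_i,
\]
where $p_i=\mathbb{P}(\tau_w\le t^*\mid X_0=x_i)\le\delta$, $\nu_i$ is the conditional law given $\tau_w>t^*$, and $\rho_i$ is the complementary conditional law. By conditional mixing, $\|\nu_i-\pi_w\|_{TV}\le\varepsilon$, so $\|\nu_0-\nu_1\|_{TV}\le 2\varepsilon$. I would then lower bound $\int\min(d\mu_0,d\mu_1)$ by
\[
\int\min\bigl((1-p_0)d\nu_0,(1-p_1)d\nu_1\bigr) \;\ge\; (1-\delta)\int\min(d\nu_0,d\nu_1) \;=\; (1-\delta)\bigl(1-\|\nu_0-\nu_1\|_{TV}\bigr) \;\ge\; (1-\delta)(1-2\varepsilon).
\]
Multiplying by $\tfrac12$ gives $R\ge(1-\delta)(\tfrac12-\varepsilon)$, which is exactly the claimed bound. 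The main bookkeeping point is using $\min(a\nu_0,b\nu_1)\ge\min(a,b)\min(\nu_0,\nu_1)$ pointwise on densities.

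\textbf{Part (2).} Here I would not use the TV identity but exhibit an explicit classifier and use that the Bayes classifier has risk at most that of any classifier. Take $C(x)=0$ if $x\in w$ and $C(x)=1$ otherwise; this is measurable since $w$ is. An error occurs only if $X^{(0)}_{t^*}\notin w$ or $X^{(1)}_{t^*}\in w$. The wells are disjoint and $x_1\in c_{w'}$, so the second event implies $X^{(1)}_{t^*}\notin w'$, hence $\tau_{w'}\le t^*$. Each error probability is therefore at most $\delta$, and with equal priors the risk is at most $\tfrac12\delta+\tfrac12\delta=\delta$.

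The main obstacle I expect is in part (1): making sure the conditional-mixing bound, which applies only to the in-well conditional laws, is combined correctly with the unequal exit probabilities $p_0\neq p_1$. The pointwise-min lower bound above sidesteps any need to control $\rho_i$, so I expect the argument to close with exactly the stated constants.
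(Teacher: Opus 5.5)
Your proof is correct and follows essentially the same route as the paper. Part (2) uses the same explicit well-indicator classifier (the paper thresholds on $w'$ rather than $w$, which is symmetric), and part (1) likewise discards the mass that has left $w$ by time $t^*$, pulls out a factor $1-\delta$, and applies the triangle inequality through $\pi_w$ together with the risk--total-variation relation. Your pointwise bound $\min((1-p_0)\,d\nu_0,(1-p_1)\,d\nu_1)\ge(1-\delta)\min(d\nu_0,d\nu_1)$ is tidier than the paper's conditioning step, which asserts $\mathbb{P}(\text{error}\mid\tau_w>t^*)\ge\frac12(1-2\varepsilon)$ as if the label were still balanced after conditioning on $\tau_w>t^*$; that intermediate claim can fail when $p_0\neq p_1$, although the final bound survives, whereas your version handles the asymmetry explicitly.
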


The proof is provided in Appendix~\ref{appendix:thm_proof}.  
The theorem shows that when intra-basin mixing occurs rapidly and inter-basin transitions are rare, the Bayes classification risk exhibits a clear separation depending on whether two initial states lie in the same basin. We emphasize that the purpose of this result is not theoretical novelty but to motivate an algorithm. This perspective connects basin identification to the classical problem of two-sample testing: trajectories generated from different initial states induce distributions over future states, and identifying basins amounts to determining whether these distributions are statistically distinguishable. In this sense, our approach (outlined below) can also be viewed as a form of contrastive learning over marginal trajectory distributions, where the classifier learns features that distinguish dynamical 
modes of the process.

\subsection{From Risk Separation to Basin Identification}

Theorem~\ref{thm} suggests the following conceptual procedure.  
Given two initial states, we simulate trajectories, train a classifier to predict which initial state generated each trajectory, and use the resulting classification risk to determine whether the states belong to the same basin.

Low classification risk indicates that the marginal trajectory distributions are distinguishable, suggesting the states belong to different basins. Conversely, risk close to $1/2$ indicates that the distributions are nearly identical, suggesting the states lie in the same basin.

In practice, the Bayes classifier is unknown. We therefore approximate it using a neural network trained on simulated trajectory data.

\subsection{A Practical Algorithm}

Our proposed method operates in two stages. An initial 'discovery stage' with the intent of producing a high recall (but possibly low precision) estimate of a set of 'candidate basins'. The second stage is 'basin refinement' which takes the set of basin candidates produced by the discovery stage and refines the estimate by merging candidate basins whose marginal trajectory distributions cannot be distinguished by a trained classifier.

\subsubsection{Basin Discovery}

The first stage of our proposed method requires producing a high recall set of candidate basins. One simple approach to this stage is to sample many independent trajectories from random initializations and take the endpoint of each trajectory as a basin candidate. This 'discovery' stage of the algorithm can be substituted with various techniques such as clustering initial trajectories, augmenting the process dynamics with exploration rewards \citep{burda2018exploration} or incorporating prior information on potential basin locations. We experiment with a few of these approaches but ultimately found it most effective to simply use random trajectory endpoints and thus we only report these results. Note that in the general case, identifying all possible basins of an arbitrary density can be an exponentially hard problem in the dimension. Solving general basin discovery is not our goal. We are rather concerned with tractable scenarios where our process gives rise to few candidate basins which are discoverable with high probabilities from some initial distribution.

\subsubsection{Basin Refinement}

Below is the procedure for the second stage of our algorithm, basin refinement. Combining a method of basin discovery with the Neural Basin Refinement algorithm above, we introduce the Neural Basin Identification algorithm (NBI). 

\begin{algorithm}[H]
\caption{Neural Basin Refinement}
\begin{algorithmic}[1]
\Require Markov Transition Kernel $P_t$
\Require Candidate states $x_1,\dots,x_K$
\Require Simulation horizon $t^*$, threshold $\gamma$, Number of simulations per trajectory $n$

\For{each candidate state $x_i$}
\State Simulate and store $n$ iid trajectories $(X_t^{(i)})_{t=0}^{t^*}$ initialized from $x_i$
\EndFor

\For{each pair $(x_i,x_j)$}

\State Build train and test sets composed of: $(X^{(i)}_{t^*},0)$ and $(X^{(j)}_{t^*},1)$ 

\State Train a neural network classifier $C_\theta$ on the train set

\State Estimate classification risk, $R(C_\theta)$, using held out test set

\If{$R(C_\theta) > \gamma$}

\State Merge $x_i$ and $x_j$

\EndIf

\EndFor

\end{algorithmic}
\end{algorithm}

Under the assumptions of Theorem~\ref{thm}, this procedure groups together initial states whose marginal trajectory distributions become indistinguishable, thereby identifying states belonging to the same basin.

\subsection{Implementation Details}

Our algorithm requires approximating $k(k-1)/2$ Bayes Classifiers (given a set of $k$ candidate modes). In practice, we train a single neural network to approximate all classifiers simultaneously. We also slightly re-frame the classification problem for simplicity. In our experiments, the neural network takes in two states $x_1$ and $x_2$ and predicts whether or not the two states belong to the same initialization. The label remains to be 0 or 1 denoting whether or not the two states come from the same initialization. We use a simple binary cross entropy loss function. This formulation avoids the need to specify which candidate modes are under comparison at any given time. To estimate the risk of the neural network classifier for any given pair $(i,j)$ of candidate basins, we simply estimate the probability of misclassification from sampling pairs of states from modes $(i,j)$.  

The algorithm requires a set of possible basin candidates along with some hyper parameters including risk threshold for merging candidates, a trajectory length and number of samples for training a neural network and a randomly initialized neural network. The set of basin candidates is simply a set of states, with each state representing the basin which it belongs to. The algorithm determines all such pairings of candidate states that in fact belong to the same underlying basin.

\subsection{Partition Indicator Estimation}

The method we have described so far produces an initial estimate of candidate basins and then refines the estimate by determining sets of candidate basins which in fact belong to the same ground truth basin. The result of running the algorithm is a set of points which are partitioned such that each partition belongs to an underlying basin. A relayed task of interest is estimating indicator functions for each component which can serve as a way to identify new unseen samples with the estimated components/partition. In addition to the partitioning of candidate basin representatives which are returned by our algorithm, we additionally end up with a trained neural network for the classification of states across candidate basins. This trained network and the mapping asserting which candidate basins to merge can be used to assign new unseen states to an underlying basin, thus capable of acting as an indicator function for any of the determined basins.


\section{Experiments}

In this section we demonstrate our method on a handful of applications. These include processes induced by the Metropolis Adjusted Langevin Algorithm (MALA) \citep{roberts1996exponential} on various synthetic energy functions, spherical Stochastic Gradient Descent for solving the Phase Retrieval problem and the molecular dynamics for the Alanine Dipeptide molecule.
With the exception of our molecular dynamics application, our examples cater to the scenario where cross-basin transitions occur rarely if at all and our processes exist in high dimensional, continuous state spaces.
To the best of our knowledge there are no existing methods designed specifically for this regime, however, we attempt to solve the basin identification problem in these cases by experimenting with a number of existing methods in order to establish baseline performance. These include a combination of standard clustering based approaches and MSM based approaches (including VAMPnet \citep{mardt2018vampnets}) which we outline in detail in Appendix \ref{appendix:experiments_baselines}. In contrast to our method, some baselines require pre-specifying the number of basins/clusters to detect, in which case we provide the baseline with the correct number of basins. In all experiments we implement a simple symmetric siamese neural network architecture consisting of dense layers and ReLU activations \citep{glorot2011deep, bromley1993signature, hadsell2006dimensionality}. We evaluate methods using two standard clustering metrics
commonly used in the machine learning literature. With ground truth basin labels available, we measure similarity between
predicted and true partitions using the Adjusted Rand Index (ARI) \cite{hubert1985comparing} and Normalized Mutual Information (NMI) \cite{strehl2002cluster}. These metrics are standard for clustering evaluation and correct for chance agreement between partitions. Additionally, for certain experiments we report the number of predicted basins by the algorithm compared to the ground truth.

\begin{figure}[t!]
\centering

\begin{subfigure}{0.24\textwidth}
    \centering
    \includegraphics[width=\linewidth]{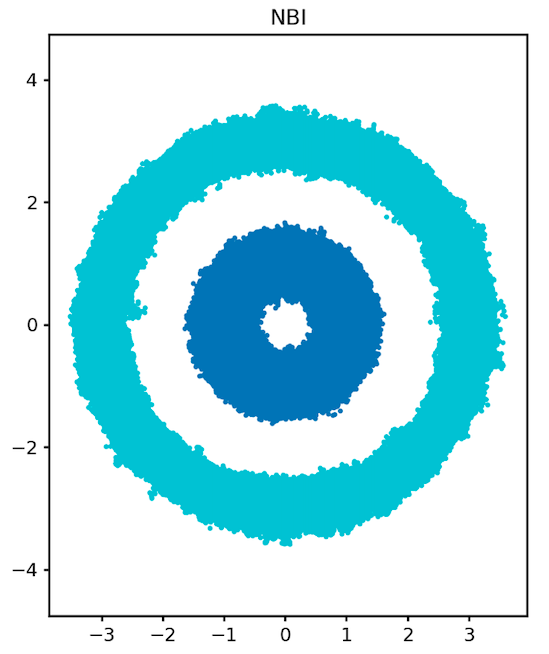}
    \caption{}
\end{subfigure}
\hfill
\begin{subfigure}{0.24\textwidth}
    \centering
    \includegraphics[width=\linewidth]{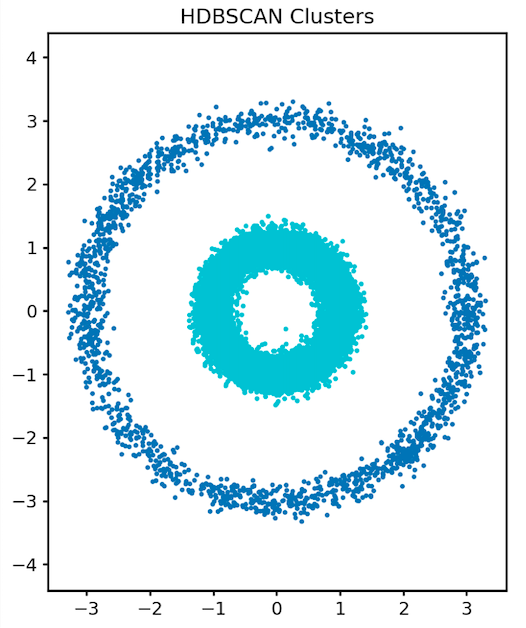}
    \caption{}
\end{subfigure}
\hfill
\begin{subfigure}{0.24\textwidth}
    \centering
    \includegraphics[width=\linewidth]{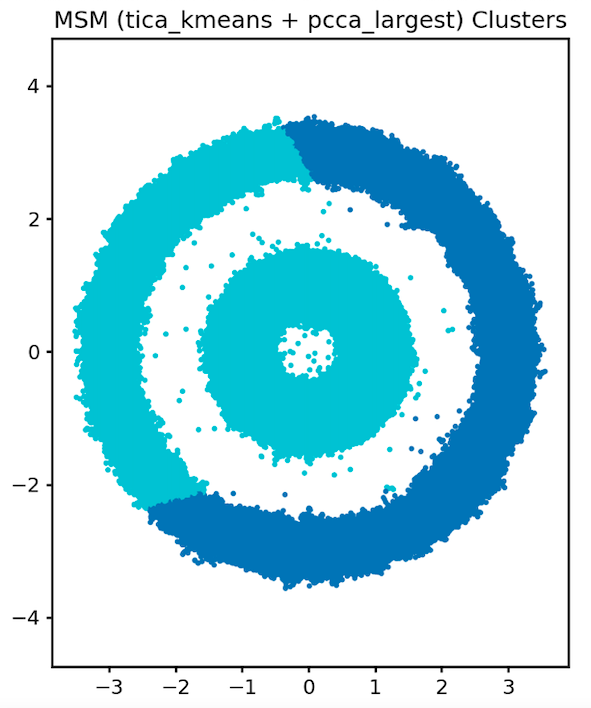}
    \caption{}
\end{subfigure}
\hfill
\begin{subfigure}{0.24\textwidth}
    \centering
    \includegraphics[width=\linewidth]{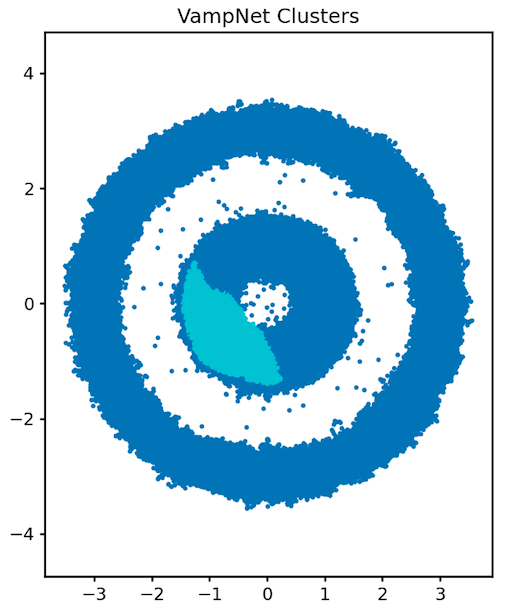}
    \caption{}
\end{subfigure}

\vspace{0.5em}

\begin{subfigure}{0.24\textwidth}
    \centering
    \includegraphics[width=\linewidth]{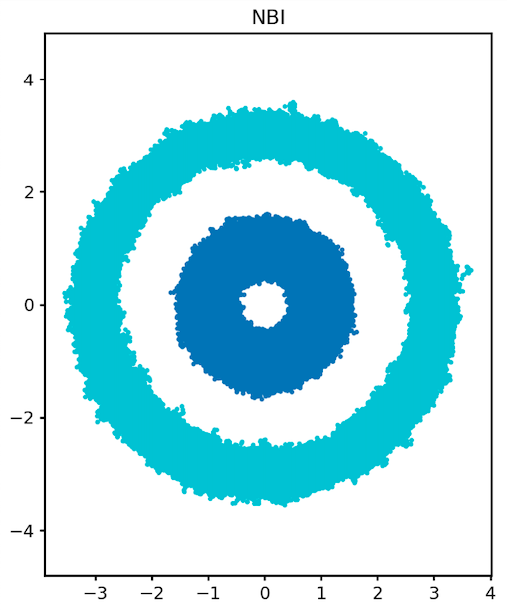}
    \caption{}
\end{subfigure}
\hfill
\begin{subfigure}{0.24\textwidth}
    \centering
    \includegraphics[width=\linewidth]{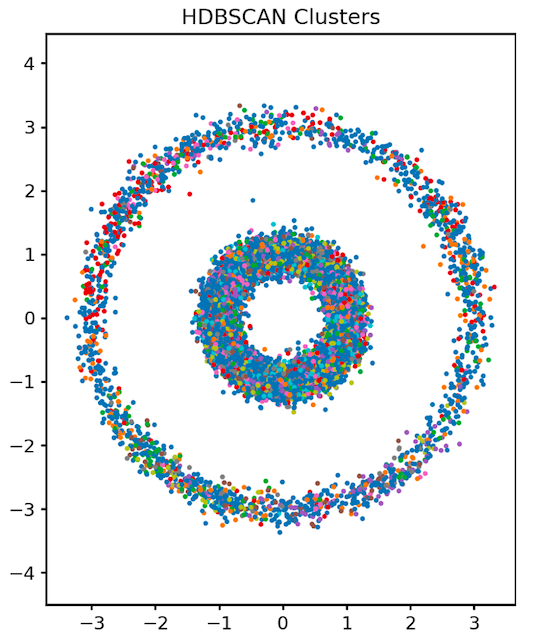}
    \caption{}
\end{subfigure}
\hfill
\begin{subfigure}{0.24\textwidth}
    \centering
    \includegraphics[width=\linewidth]{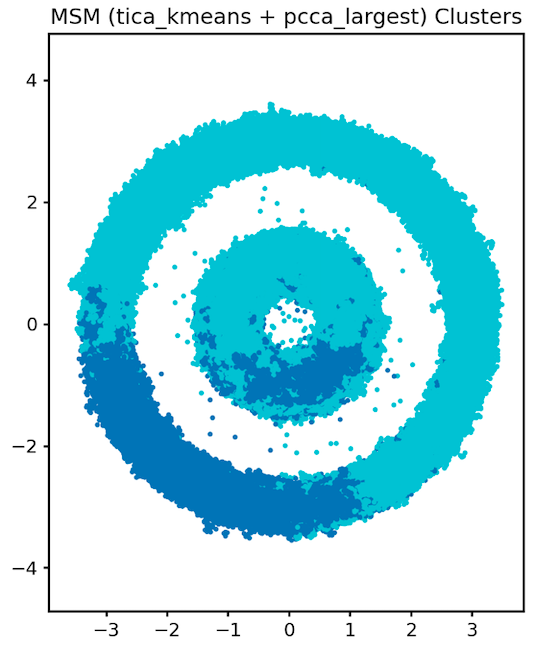}
    \caption{}
\end{subfigure}
\hfill
\begin{subfigure}{0.24\textwidth}
    \centering
    \includegraphics[width=\linewidth]{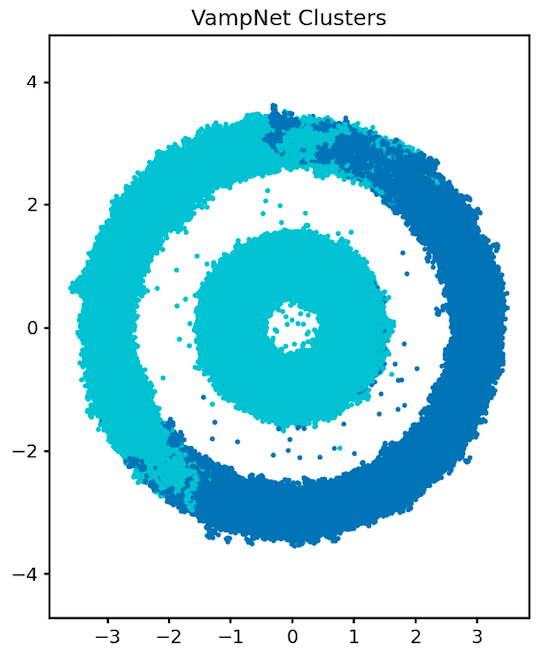}
    \caption{}
\end{subfigure}

\caption{The figure contains trajectories plotted by cluster label for various different methods. The top row corresponds to the MALA process running on a simple 'double ring' energy function composed of two Gaussian rings. The bottom row corresponds to the same structure embedded in 100 dimensional ambient space. In this case, we plot only the low dimensional subspace in which the separability of the basins can be visualized.}
\label{ring_plot}
\end{figure}

\begin{table}[htbp] 
    \centering
    \caption{Reported ARI Across Synthetic Low and (embedded) High Dimensinional Processes. We report the mean and standard deviation across 10 independent experimental results. If a method detects only a single basin, the reported ARI is 0.}
    \label{tab:embedded_results}
    \begin{tabular}{lcccccc} 
        \toprule
        \multirow{2}{*}{\textbf{Method}} & \multicolumn{2}{c}{\textbf{2d Double Ring}} & \multicolumn{2}{c}{\textbf{2d GM}} & \multicolumn{2}{c}{\textbf{3d Helix}} \\
        \cmidrule(lr){2-3} \cmidrule(lr){4-5} \cmidrule(lr){6-7}
        & $D=2$ & $D=100$ & $D=2$ & $D=100$ & $D=3$ & $D=100$ \\
        \midrule
        NBI (Ours)  & \textbf{1.0} $\pm 0.00$& \textbf{1.0} $\pm 0.00$& \textbf{1.0} $\pm 0.00$  & \textbf{1.0} $\pm 0.00$ & \textbf{1.0} $\pm 0.00$& \textbf{0.90} $\pm 0.30$\\
        VampNet  & 0.36$\pm 0.35$ & 0.63 $\pm 0.31$ & 0.68  $\pm 0.21$ & 0.54 $\pm 0.23$ & 0.14 $\pm0.15$&  0.00 $\pm 0.01$\\
        HDBSCAN & \textbf{1.0} $\pm 0.00$ & 0.01 $\pm 0.01$& \textbf{1.0} $\pm 0.00$ & 0.00 $\pm 0.00$ & \textbf{1.0} $\pm 0.00$ &  0.00 $\pm 0.00$\\

        MSM  & 0.92 $\pm 0.11$& 0.22 $\pm 0.19$ &0.75  $\pm 0.38$ & \textbf{1.0} $\pm 0.00$ & 0.04 $\pm 0.03$ & 0.00 $\pm 0.00$ \\

        \bottomrule
    \end{tabular}
    
\end{table}

\subsection{Embedded Low Dimensional Structure}
\label{experiment:low_dim_structure}
We consider 3 synthetic stochastic processes constructed by running the Metropolis Adjusted Langevin Algorithm (MALA) with two and three dimensional energy functions. We demonstrate our approach along with our baselines on these low dimensional processes and additionally we consider high dimensional processes constructed by linearly embedding these low dimensional structures into high dimensional ($d=100$) ambient spaces. Specifically, we construct augmented energy functions which decompose the state space into the low dimensional linear subspace and the orthogonal subspace. The energy of the low dimensional subspace is computed using the original low dimensional energy and the energy of the orthogonal component is that of an isotropic Gaussian. MALA is then run on the augmented energy in ambient space. These simple high dimensional examples represent scenarios where the basin structure is contained on a low-dimensional manifold which is embedded in a noisy high dimensional ambient space. We outline the energy functions in more detail in Appendix \ref{appendix:experiments_energies}. Results are reported in table \ref{tab:embedded_results} (and further in \ref{appendix:embedded_results}, along with a visualization of some produced clusters in figure \ref{ring_plot}. We see that some of the baselines achieve high performance across one or two of the tasks but in general have very inconsistent performance. Additionally, we observe that for many of the methods performance may be strong on the low dimensional process but degrades significantly when applied to the exact same structure embedded in a noisy high dimensional ambient space. Thus highlighting a clear limitation of existing approaches. In contrast, our method achieves consistent high performance across all tasks demonstrating a robustness to operating in noisy high dimensional spaces.

 We suspect a few reasons for why baselines struggle. Firstly, samples come from trajectories and hence display strong auto-correlation, which for HDBSCAN can affect stability of connected components. Additionally, both HDBSCAN and MSM make use of Euclidean distance to determine connectedness and cluster assignments. This may be misleading as densities may have disconnected components which are close in ambient space, and these methods are unable to consider the underlying geometry of the density. We comment more on this in appendix \ref{appendix:experiments_failure} 

\subsection{SGD on Phase Retrieval}

Phase retrieval is a classical problem which can be solved by gradient descent and whose dynamics are well understood \citep{ji2016gradient, tan2019online, arous2021online, mignacco2021stochasticity}. The problem is to recover an unknown signal $x^\star \in \mathbb{S}^{d}$ (the $d$-dimensional sphere) from iid observations of the form $y_i = (a_i^\top x^\star)^2 + \epsilon_i$, with $a_i \sim \mathcal{N}(0, I_d)$ and $\mathbb{E}\epsilon=0$. The unknown vector can be estimated by $x$ obtained via spherical stochastic gradient descent (SGD) on the loss: $L(x, (a_i, y_i)) = [(a_i^Tx)^2 - y_i]^2$. This problem exhibits an inherent sign ambiguity, with two global minima at $\pm x^\star$. When optimized via SGD, the resulting dynamics evolve in a high dimensional parameter space but are effectively governed by a single summary statistic, the alignment with $x^\star$. We thus take SGD on the estimated parameter space (with dimension $d=200$) as our Markov process and attempt to recover the two known basins. Results are reported in table \ref{experiments:sgd}. 

\begin{table}[h]
\centering
\caption{Results for SGD on Phase Retrieval.}
\label{experiments:sgd}

\begin{tabular}{lccccc}
\toprule
 \textbf{Metric} & \textbf{NBI (ours)} & \textbf{VAMPnet} & \textbf{HDBSCAN} & \textbf{MSM}  \\ \midrule
ARI    & \textbf{0.88} $\pm 0.30$ & 0.57 $\pm 0.46$              & 0.43 $\pm 0.03$  &  0.86 $\pm 0.03$  \\
NMI & \textbf{0.87} $\pm 0.30$ & 0.55 $\pm 0.45$  & 0.38 $\pm 0.02$  &  0.78 $\pm 0.03$  \\ \bottomrule
\end{tabular}
\end{table}

\subsection{Gaussian Mixture Models}

A standard example of a high dimensional energy landscape with multiple well separated basins is a high dimensional Gaussian Mixture. We test our method along with a strong performing baseline of HDBSCAN which also does not require prior knowledge on the number of components. To construct these landscapes we generate some arbitrary number of components at random locations. The purpose of this experiment is simply to demonstrate that our method can handle processes with many disjoint basins without specifying the number of basins. For this experiment we report ARI, NMI and the predicted number of basins which can be compared to the ground truth. More details and results can be found in Appendix \ref{appendix:experiments_energies} and table \ref{appendix:gmm_table}.

\subsection{Alanine Dipeptide}

Alanine dipeptide is a widely studied molecular dynamics system (of 22 particles represented by a 66 dimensional state vector) whose slow dynamics are well captured by two backbone dihedral angles, $\phi, \psi$, commonly visualized via the Ramachandran plot. The resulting low-dimensional representation exhibits multiple metastable basins. However, existing works suggest that the separation of time scales between intra-basin-mixing and cross-basin transitions is not that strong with cross-basin transitions occurring with non-trivial frequency even across short trajectories. Thus the problem assumptions under which we expect success from our method may not be valid. Our goal for this experiment is not to demonstrate superiority over existing methods but rather to show that our method may provide an additional tool for those studying molecular dynamics and additionally to demonstrate our method on a real world system. In particular, a system that exists in a high dimensional continuous state space (66 dimensions) and has known metastable structure existing on a low dimensional manifold within this space. We apply our method to both the full 66-dimensional system and additionally to the exact same trajectories projected into the 2-d subspace of the backbone dihedral angles. We provide visualization of the Ramachandran plot and the classification of trajectories by our model done in both the full and reduced state systems. The model trained in the full high dimensional space over-segments the trajectories in the Ramachandran, whereas the model trained in 2 dimensions draws a clear boundary between 2 basins. This suggests that while the distributions of the trajectories under this 2-d projections may have become indistinguishable, their full 66-dimensional distributions remain distinguishable, providing some insight into the mixing timescales of the full and low dimensional spaces. These figures along with additional discussion can be found in Appendix \ref{appendix:alanine_dipeptide}.

\section{Conclusion, Limitations and Future Work}
We propose a method for identifying metastable basins of a time-homogeneous Markov process from black-box access to its dynamics. Our focus is on high dimensional continuous state spaces where transitions between basins occur rarely or not at all (i.e., the process may be reducible) and further, within-basin mixing occurs on shorter timescales. 
Experiments on several synthetic systems demonstrate that our approach performs well in high dimensional and noisy settings.

The proposed method has some notable limitations. It assumes a separation between within-basin and global mixing timescales.
When this separation is weak or the timescale is poorly chosen, the basin definition considered here may become ambiguous or sensitive to hyper parameters including the merging threshold. In practice, we found monitoring true and false positive rates during training helpful for determining useful hyperparameters. 

Future work includes exploring applications in areas where high dimensional Markov processes arise, such as neural sampling methods, reinforcement learning, molecular dynamics, loss landscape analysis, etc. As well as adapting the method to settings where only trajectory data are available.

\newpage
\bibliographystyle{plain}
\bibliography{main}

\newpage
\appendix

\section{Proof of Main Result}
\label{appendix:thm_proof}
Below we provide the proof of Theorem \ref{thm}.
\begin{proof}
    The proof is straight forward. Suppose WLOG $x_1 \in c_w$. We start with 1. Note that by definition of the Basin Partition, we have $\mathbb{P}(\tau_{w} > t^*) \geq 1-\delta$. We can simply bound:
    
    \begin{align*}
        R(C^{Bayes,t^*}_{(x_0, x_1)}) &\geq \mathbb{P}(C^{Bayes,t^*}_{(x_0, x_1)}(X^{(y)}_{t^*})\neq y, \tau_{w}>t^*)\\& = \mathbb{P}(C^{Bayes,t^*}_{(x_0, x_1)}(X^{(y)}_{t^*})\neq y\,| \tau_{w}>t^*)\mathbb{P}(\tau_{w}>t^*)\\&\geq(1-\delta)\mathbb{P}(C^{Bayes,t^*}_{(x_0, x_1)}(X^{(y)}_{t^*})\neq y\,| \tau_{w}>t^*)
    \end{align*}

\[
\sup_{x\in c_w}
\left\|
P_t(\cdot \mid \tau_{w} > t, X_0=x)
- \pi_w
\right\|_{TV}
\le \varepsilon
\quad \text{for all } t \ge t^* .
\]

    Now by assumption we have that $\|P_t(X^{(y)}_{t^*}|\tau_{w}>t^*) - \pi_w\|_{TV} \leq \epsilon$ whether $y=0$ or $y=1$. By triangle inequality we thus have $\|P_t(X^{(0)}_{t^*}|\tau_{w}>t^*) - P_t(X^{(1)}_{t^*}|\tau_{w}>t^*)\|_{TV}\leq 2\epsilon$. It is a well known fact that the risk of a Bayes classifier which selects which of two distributions an observable came from, is equal to a linear function of the total variation distance between the probability measures. Here we get an inequality since the we consider the conditional risk of the Bayes classifier in contrast to the risk of the Bayes classifier for the conditional distribution: $$\mathbb{P}(C^{Bayes,t^*}_{(x_0, x_1)}(X^{(y)}_{t^*})\neq y\,| \tau_{w}>t^*) \geq \frac{1}{2}(1-\|P_t(X^{(0)}_{t^*}|\tau_{w}>t^*) - P_t(X^{(1)}_{t^*}|\tau_{w}>t^*)\|_{TV}) \geq \frac{1}{2}(1-2\epsilon)$$ Putting it together we get the desired: $$R(C^{Bayes,t^*}_{(x_0, x_1)}) \geq (1-\delta)(\frac{1}{2}-\epsilon)$$

    Moving on to 2. Consider the following classifier: $C(x) = 1$ if $x\in w'$ and 0 otherwise. Then
    
    \begin{align*}
        R(C) &= \mathbb{P}(y=0, X^{(0)}_{t^*}\in w') + \mathbb{P}(y=1,X^{(1)}_{t^*}\notin w') \\ & \leq \mathbb{P}(y=0, \tau_w < T) + \mathbb{P}(y=1,\tau_{w'} < T)
        \\ & = \mathbb{P}(y=0)\mathbb{P}(\tau_w < T |y=0) + \mathbb{P}(y=1)\mathbb{P}(\tau_{w'} < T |y=1)\\ &\leq \frac{1}{2}\delta + \frac{1}{2}\delta \\& = \delta
    \end{align*}
    
    Using the well known fact that $R(C_{Bayes}) \leq R(C)$ for any classifier $C$ completes the proof. 
\end{proof}

\section{Experiments}

In this section we include additional experimental details for the sake of clarity. Additionally, we have made our code publicly available for the sake of reproducing our results and for future work. All experiments are run on a NVIDIA L40S GPU. Any experimental run can be performed on this machine in a matter of minutes up to an hour. For the first two experiments under which we report quantitative metrics, we repeat each experimental run 10 times under different seeds. The total GPU time to run all experiments amounts to approximately 1.5 days.

\subsection{Baselines}\label{appendix:experiments_baselines}

We experiment with numerous baselines. We emphasize that none of the methods we experiment with were designed specifically for the scenarios we apply them to here, nevertheless we do our best to adapt these methods appropriately. In section \ref{appendix:experiments_failure} we provide discussion on the failure modes of these baselines which ultimately motivate the necessity of our proposed method. In our experiments we roughly equate the number of trajectories sampled in an attempt to keep comparisons fair. When applying methods however that have quadratic complexity in the number of samples we downsample to 15,000 samples for computational feasibility. 

\subsubsection{Clustering}

One way of framing the problem of identifying metastable components is in terms of unsupervised clustering of simulated trajectories. Many clustering algorithms exist and we experiment with a few of these methods both on their own, as well as integrated into MSMs (discussed below). Specifically we consider HDBSCAN \citep{campello2015hierarchical} and Diffusion Maps \citep{coifman2006diffusion} for clustering raw simulated trajectories. For each experiment, we randomly initialize and simulate some fixed number of trajectories. We then treat each state across each trajectory as a sample to construct a 'dataset'. Both of these methods require computing pairwise distances and thus have quadratic complexity in the number of samples and thus we only perform clustering on a random subsample of the constructed dataset. In our experiments we produce a subsample of 15,000 for clustering, this is why the cluster plots appear more sparse for these algorithms. Despite utilizing less data, in some cases subsampling can improve the performance of HDBSCAN by reducing auto-correlation. We found Diffusion Maps to perform poorly in all our experiments and thus refrain from reporting their results. 

\subsubsection{VAMPnet}

VAMPnets \cite{mardt2018vampnets} are a neural network-based approach for learning low-dimensional representations of dynamical systems by approximating the leading singular functions of the transfer operator \cite{mardt2018vampnets}. Given time-lagged pairs of samples $(x_t, x_{t+\tau})$, a VAMPnet consists of two (typically identical) neural networks that map configurations to a feature space. The network is trained to maximize a variational score derived from the \emph{Variational Approach for Markov Processes} (VAMP), which encourages the learned features to capture the slowest dynamical modes of the system.

Concretely, letting $f_\theta(x)$ denote the network output, the method constructs covariance and cross-covariance matrices between $f_\theta(x_t)$ and $f_\theta(x_{t+\tau})$, and optimizes a matrix norm (e.g., the VAMP-2 score) that approximates the dominant singular values of the Koopman operator. The learned representation can then be used for clustering, dimensionality reduction, or as input to Markov state model construction. Unlike classical pipelines that rely on hand-crafted features and linear methods such as TICA, VAMPnets learn nonlinear transformations directly from data in an end-to-end fashion.

We experiment with VAMPnets as a baseline in a few manners and ultimately only report results for one variant. The output of a VAMPnet is a softmax classification over some specified dimension, $c$ meaning that the network acts as softly assigning any given state to one of $c$ classes. As one baseline, we train a VAMPnet on randomly sampled trajectories and provide the true number of basins as the value of $c$. We then take the softmax assignment of each state as the cluster label for that state. This is the 'VAMPnet' method we report. We additionally experiment with using the second last layer of the VAMPnet as an embedding which captures slow dynamics and apply clustering algorithms (K-Means and HDBSCAN) on the embedded samples. We do not find the performance of these approaches to improve upon the VAMPnet or basic HDBSCAN methods and thus do not report the results. We emphasize that VAMPnets are designed for the scenario where transitions between metastable basins are slow but do indeed occur and thus we are not surprised that they are not always effective as a baseline for regimes where we do not observe cross-basin transitions.

\subsubsection{MSMs}

Markov State Models (MSMs) are a widely used framework for modeling the long-timescale dynamics of molecular systems by discretizing the state space into a finite set of metastable states \cite{prinz2011markov}. Given a trajectory $\{x_t\}$, the configuration space is first partitioned into discrete states (e.g., via clustering in some feature space), and transitions between these states are counted at a fixed lag time $\tau$ to estimate a transition probability matrix.

Under the Markov assumption at lag time $\tau$, the dynamics are approximated by a discrete-time Markov chain, where the transition matrix encodes the probabilities of moving between states over time $\tau$. Spectral analysis of this matrix yields estimates of slow dynamical processes, including metastable states and their associated relaxation timescales. In practice, MSM construction often involves a pipeline of feature selection, dimensionality reduction (e.g., via time-lagged independent component analysis), clustering, and validation to ensure approximate Markovianity at the chosen lag time.

We experiment with multiple MSM variants. One design choice for an MSM is identifying how to perform a discretization / compute a set of 'microstates' on which to compute the transition matrix. A second design choice is how to extract basins or 'macrostates' based on the transition matrix over microstates. A common approach for constructing microstates in Markov state models is to first apply time-lagged independent component analysis (TICA) to project high dimensional molecular configurations onto a low-dimensional space that captures the slowest dynamical modes, and then perform k-means clustering in this reduced space to discretize the data into microstates. PCCA+ (Perron Cluster Cluster Analysis) \citep{deuflhard2005robust} is a standard method for coarse-graining a Markov state model by grouping microstates into metastable macrostates. Given a microstate transition matrix, PCCA+ uses the dominant eigenvectors of the matrix to identify almost-invariant sets and constructs a soft assignment of microstates to macrostates, which can then be converted into a hard clustering. This allows one to extract a small number of kinetically meaningful metastable states directly from the MSM. As our MSM baseline we opt for TICA + Kmeans for microstate selection and PCCA+ for microstate to macrostate assignment, ultimately yielding a complete pipeline. 

We note that this pipeline (and MSMs more generally) are primarily used for detecting metastable components in the regime where cross-basin transitions are rare but occurring in observed trajectory data. Additionally MSMs are applied directly to observed trajectory data and so to use them as baselines here we randomly sample some number of trajectories from the given dynamics before applying the MSM. We also experiment with some different choices for constructing microstates and mapping microstates to macrostates. For the former, we consider defining clusters by using the same set of 'candidate modes' used by our method in place of the cluster centroids of TICA + K-means. We also experiment with mapping each state to a microstate by using a trained VAMPnet to produce soft assignments to some specified number of microstates. For assigning microstates to macrostates we also experiment with using connected components of the microstates. Since we consider problems where state spaces may be completely reducible or cross-basin transitions may not even occur on the time scales under which we simulate dynamics, we allow for the possibility that the microstate transition matrix be decomposed into disjoint connected components and use these components as macrostates. However we find that all of these alternatives results in much worse average performance than the canonical MSM pipeline and thus we do not report those results.

\subsection{Energy Functions}\label{appendix:experiments_energies}

\subsubsection{Low-dim Energy Functions}

For our first experiment we consider three low-dimensional energy functions. We apply our method and the baselines directly to the MALA process constructed with these low-dimensional energy functions and additionally to augmented energy functions consisting of the same low dimensional structure linearly embedded in a high dimensional state space with isotropic Gaussian energy in the orthogonal directions. The first energy function consists of two rings, with one ring existing inside of the other. The second energy function is a 2-d Gaussian mixture with cluster locations randomly selected. We fix the cluster locations (in 2-d space) across the low and high dimensional experiments. And lastly a 3-d helix which consists of 2 basins wrapping around eachother and 4 Gaussians at each end point. We provide a plot of samples produced by each of the low dimensional processes in figure \ref{appendix:process_plot}.

\begin{figure}[htbp]
    \centering
    
    \begin{subfigure}[b]{0.4\textwidth}
        \centering
        \includegraphics[width=\textwidth]{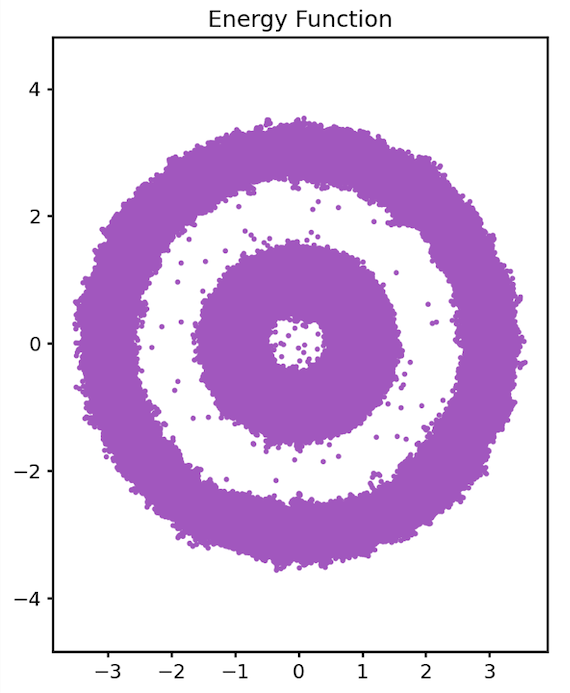}
        \caption{2d Double Ring Energy}
    \end{subfigure}
    \hfill 
    \begin{subfigure}[b]{0.4\textwidth}
        \centering
        \includegraphics[width=\textwidth]{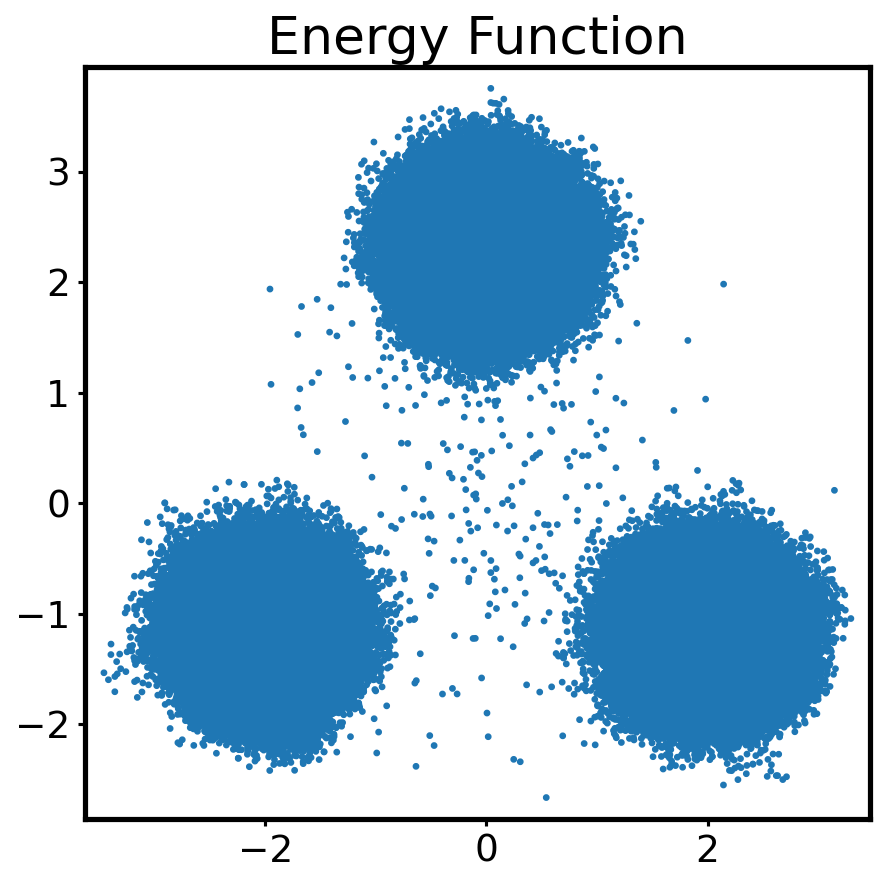}
        \caption{2d Gaussian Mixture Energy}
    \end{subfigure}

    \vspace{0.8cm} 

    \begin{subfigure}[b]{0.8\textwidth}
        \centering
        \includegraphics[width=\textwidth]{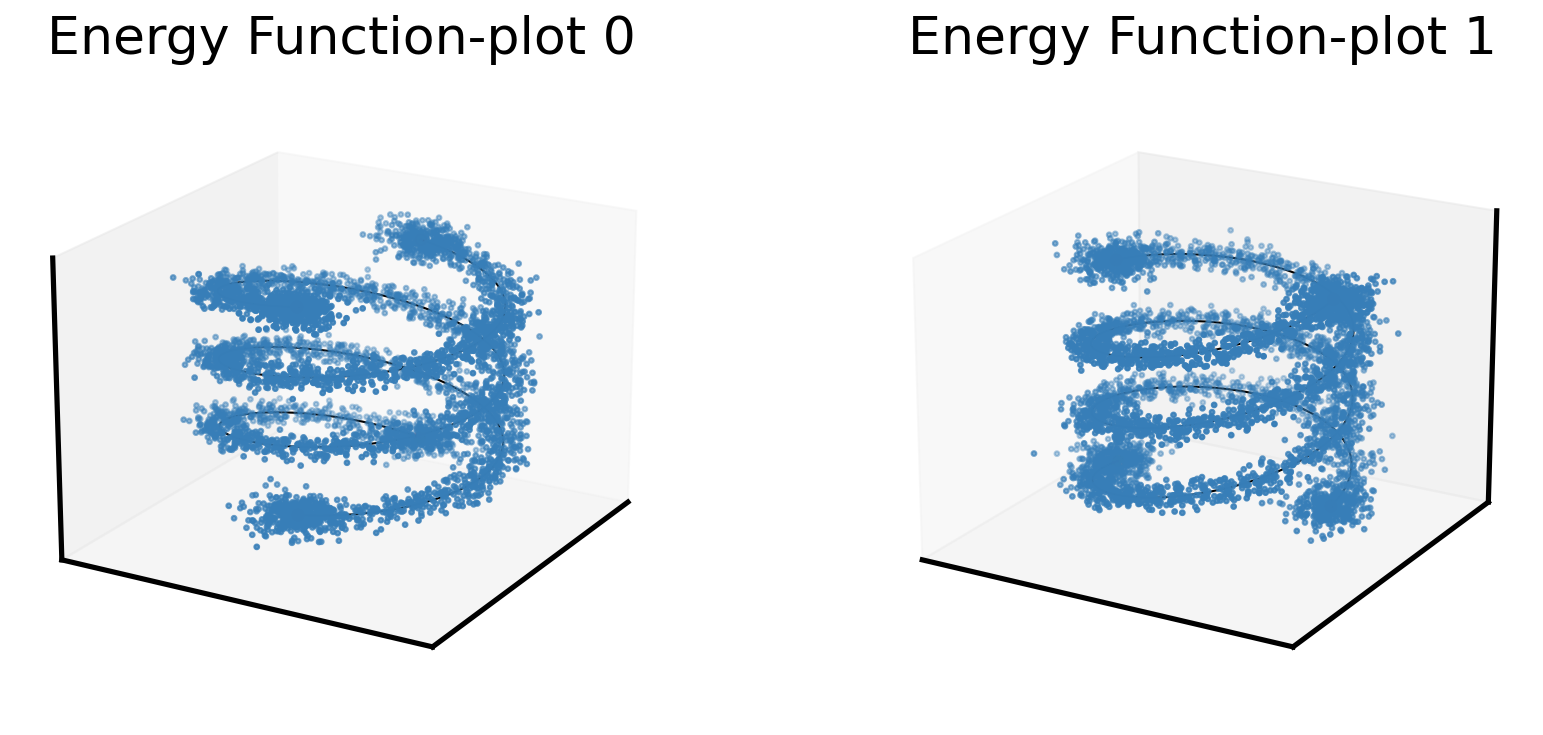}
        \caption{3d Helix Energy}
    \end{subfigure}

    \caption{visualizations of samples drawn from the low dimensional stationary distributions of the MALA processes used in experiments.}
    \label{appendix:process_plot}
\end{figure}

\subsubsection{Gaussian Mixtures}

The second experiment we consider is simple Gaussian mixtures in 100 dimensional space. We select some arbitrary number of components between 0 and 50 and the locations for these components are selected randomly from the sphere of radius 10 (i.e. $\sqrt{d}$). This ensures with high probability the components remain well separated and we can expect successful methods to identify the correct number of components as well as having high NMI and ARI. For the discovery stage of our algorithm we randomly initialize 100 MALA chains with a standard isotropic Gaussian. We note that in some cases the discovery stage of our algorithm fails to identify all the components leading to suboptimal performance. This could be corrected for by producing more random trajectories during the sampling stage. In contrast the baseline method HDBSCAN which is also successful in this setup has a higher recall for mode discovery because all sampled trajectories come from distinct random initializations, whereas under our method, the majority of the sampling budget is spent producing trajectories from the candidate basins after the discovery stage. This highlights a limitation of our method in terms of reliance on the method of basin discovery. The results in terms of number of basins discovered can be found in table
\ref{appendix:gmm_table}. In this experiment we fix the compute allocated even as the number of components scales up which we observe harms the performance of our method.

\begin{table}[h]
\centering
\caption{Number of basins identified across various Gaussian mixtures in 100 dimensional space. GMM X-Basin corresponds to MALA run on a Gaussian mixture with X components.}
\label{appendix:gmm_table}
\begin{tabular}{lcccc}
\toprule
\textbf{Method} & \textbf{GMM 7-Basin} & \textbf{GMM 12-Basin} & \textbf{GMM 26-Basin} & \textbf{GMM 42-Basin} \\ \midrule

NBI (Ours) &  7$\pm0.0$  & 12 $\pm0.0$ &  25.1$\pm 0.54$  &  37.2 $\pm1.66$  \\
HDBSCAN &  7$\pm0.0$ & 12$\pm0.0$  &  26 $\pm 1.66$  &  41.8 $\pm 0.4$  \\ \bottomrule
\end{tabular}
\end{table}

\subsection{Failure Modes}\label{appendix:experiments_failure}

Our first experiment which involves embedding a low-dimensional basin structure into a high dimensional noisy ambient space highlights a shortcoming of existing approaches at detecting long-term meta stable basins or even reducible components. These synthetic processes are representative of a broader class of processes which exist in a high dimensional state space, but who's basin-defining structure exists on a low dimensional manifold. Further this low-dimensional manifold may have its own non-Euclidean geometry. Below we discuss possible reasons why this class of processes may pose problems for existing methods.

\subsubsection{HDBSCAN}

Firstly we consider HDBSCAN, a commonly used and powerful density based clustering algorithm. This method builds clusters based on connected components of a constructed distance graph. The ability to cluster samples based on connectedness is favorable as we essentially want to detect the continuous state space analog of connected components. However, there are a few possible limitations which can restrict this approach. For one, when applied to trajectory data, there is strong auto-correlation present in the samples. HDBSCAN builds clusters based on a stability metric for connected components. This stability metric may become biased when subsets of the samples exhibit auto-correlation and can make the algorithm more inclined to clustering each trajectory individually, something that we observed over the course of experimentation. The results can also be sensitive to stepsize of the process dynamics as this influences the degree of auto-correlation. 

Secondly, the distance between points is computed using Euclidean distance. This can be an unreliable measure of distance between points when the low energy regions of the state space fall on a low dimensional manifold. For example, two points can be close in Euclidean space but may exist in disjoint reducible components of the state space. Further, in a high dimensional space where many directions do not contribute to basin structure, the Euclidean distance can be dominated by noisy irrelevant directions rendering it ineffective. 

\subsubsection{MSMs}

Markov State Models rely on reducing a continuous state space to a finite set of states and assigning each points in the true state space to one of a few candidates. This assignment is often performed based on Euclidean distance. Thus, the assignment can often be problematic for the same reasons as discussed above. This means that the continuous state space is already restricted in terms of which regions can be determined to belong to the same basin or not and this restriction is based on Euclidean distance rather than the geometry of the energy landscape.

\subsubsection{VAMPnet}

In contrast to the two approaches discussed above, VAMPnets do not make use of Euclidean distance when determining assignments from continuous state space to a finite set of states. Like our own method, VAMPnets learn these assignments through the use of neural networks. However, VAMPnets are trained on pairs of lagged states across trajectories and optimize the 'VAMP' objective which measure how well mapped states capture the slow dynamical processes of the system. However, since the method is trained on lagged pairs from trajectories, the method requires observing some cross-basin transitions in order to determine that transitions between those basins are 'slow moving' and without observing cross-basin transitions the model is not incentivized to enforce states belonging to distinct basins to remain distinct after mapping them to the finite set of states. This highlights that VAMPnets are not designed for the set up we consider here and we emphasize that our method is not designed to compete with VAMPnets but offer an alternative tool for solving a related but different problem.

\subsection{SGD on Phase Retrieval}

\begin{table}[h]
\centering
\caption{Results for SGD on Phase Retrieval.}
\begin{tabular}{lccccc}
\toprule
 \textbf{Metric} & \textbf{NBI (ours)} & \textbf{VAMPnet} & \textbf{HDBSCAN} & \textbf{MSM}  \\ \midrule
ARI    & \textbf{0.88} $\pm 0.30$ & 0.57 $\pm 0.46$              & 0.43 $\pm 0.03$  &  0.86 $\pm 0.03$  \\
NMI & \textbf{0.87} $\pm 0.30$ & 0.55 $\pm 0.45$  & 0.38 $\pm 0.02$  &  0.78 $\pm 0.03$  \\ \bottomrule
\end{tabular}
\end{table}

As we can see from the results above, our method is the best performing. The results are somewhat close with MSM and we highlight that the variance of our method is in fact much higher than that of MSM. The results reported are based on 10 independent runs and our method performed near perfectly on 9/10 of the runs, and on one it reported only a single basin (in contrast to the correct response of 2 basins) and hence received an NMI and ARI of 0 in this case. The MSM method in contrast is told that the correct number of basins is two, apriori but still performs worse on average. Further we point out that the MSM method performed reasonably well here as well as on the 2d Gaussian Mixture energy (in the low dimensional Structure embedded in high dimensional space example). Both of these examples the assignment of states in the low dimensional subspace can in fact be determined by Euclidean proximity, which is not the case in the other low dimensional examples, which is why performance of MSM is much worse in those settings.

\subsection{Alanine Dipeptide}
\label{appendix:alanine_dipeptide}
We apply our method to the Alanine Dipeptide molecular dynamics process. We make use of the open source python library OpenMM \citep{eastman2023openmm} for performing simulations. For the discovery phase we initialize 20 independent simulations with a high temperature of 2000 and run for 1000 steps of stepsize 0.002 picoseconds in order to discover candidate basins. For sampling from the discovered candidate basins we reduce the temperature to 200 and run for 700 additional steps. Even on these relatively short timescales, we frequently observe transitions across learned basins suggesting our assumption about separation of inter and intra basin mixing times is not well grounded. Due to this we find that the results of the model can be sensitive to changes in hyper parameters. We train a model for detecting meta stable basins on both the full 66-dimensional trajectories and additionally on the 2-d system that arises from projecting the full dynamics onto the 2 backbone dihedral angles as is done when producing the Ramachandran plot. We plot the trajectories in the Ramachandran plot, colored by basin assignment from both models in figure \ref{alanine_dipeptide_plot}. Despite cross-basin transitions occurring over the course of the simulation, our method identifies 2 basins when trained on the 2-d system and we additionally visualize the partition of the 2-d state space in the same figure referenced above. In contrast, when trained on the full 66 dimensional system, the model identifies 16 out of a possible 20 basins. This implies the trained model in the full 66 dimensional space is able to distinguish the distributions of these short trajectories with distinct initializations when given access to the full system, but is not able to distinguish their distributions when given the 2-d system. This highlights the limitations of our method when the assumption on separation of mixing times is not well founded.

\begin{figure}[h!]
    \centering
    \begin{subfigure}[b]{0.45\textwidth}
        \centering    \includegraphics[width=\textwidth]{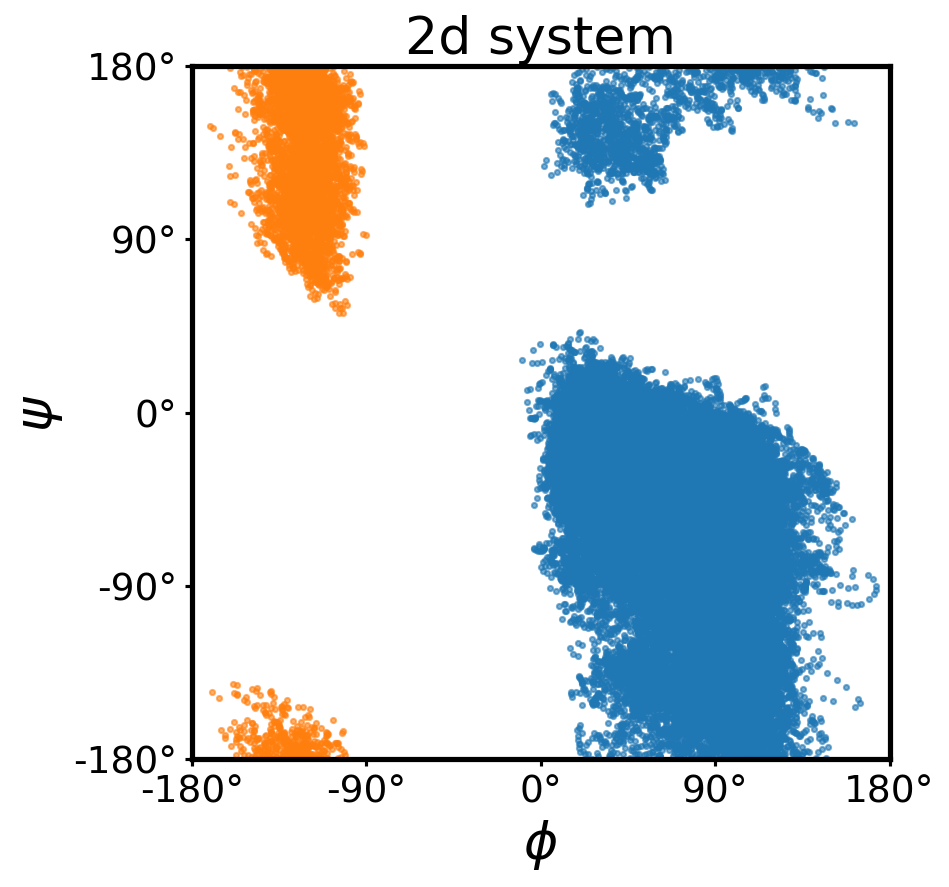}
        \caption{}
    \end{subfigure}
    \hfill
    \begin{subfigure}[b]{0.45\textwidth}
        \centering
        \includegraphics[width=\textwidth]{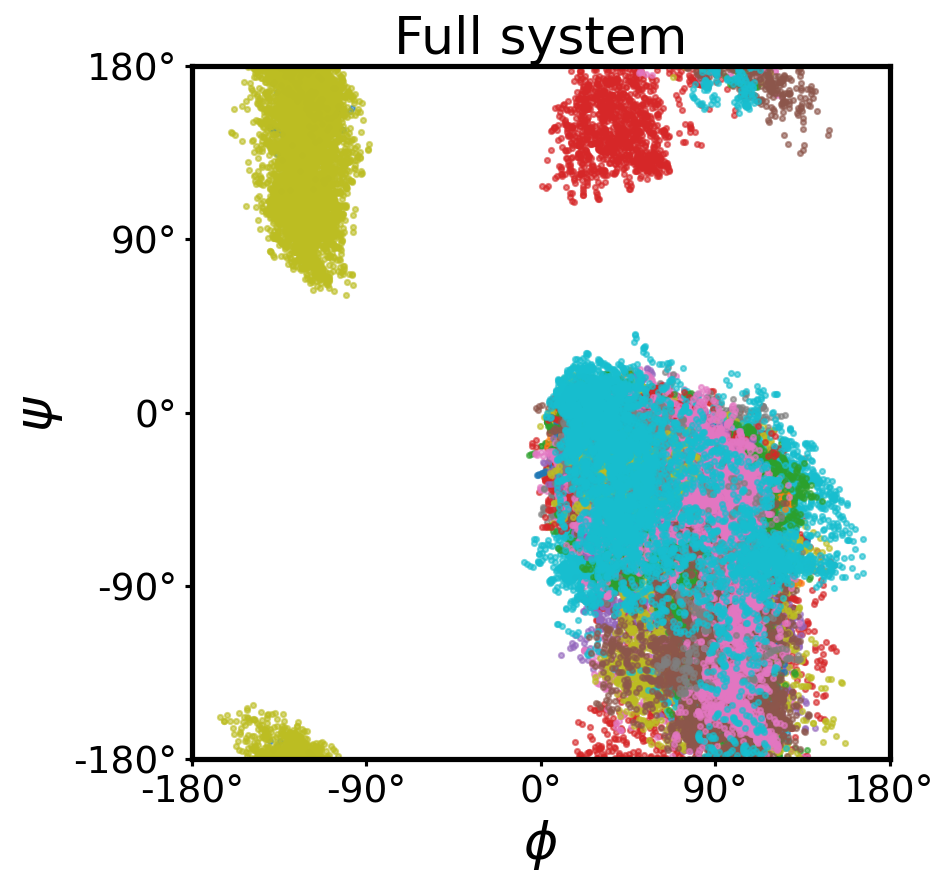}
        \caption{}
    \end{subfigure}

    \vspace{1cm} 

    \begin{subfigure}[b]{0.5\textwidth}
        \centering
        \includegraphics[width=\textwidth]{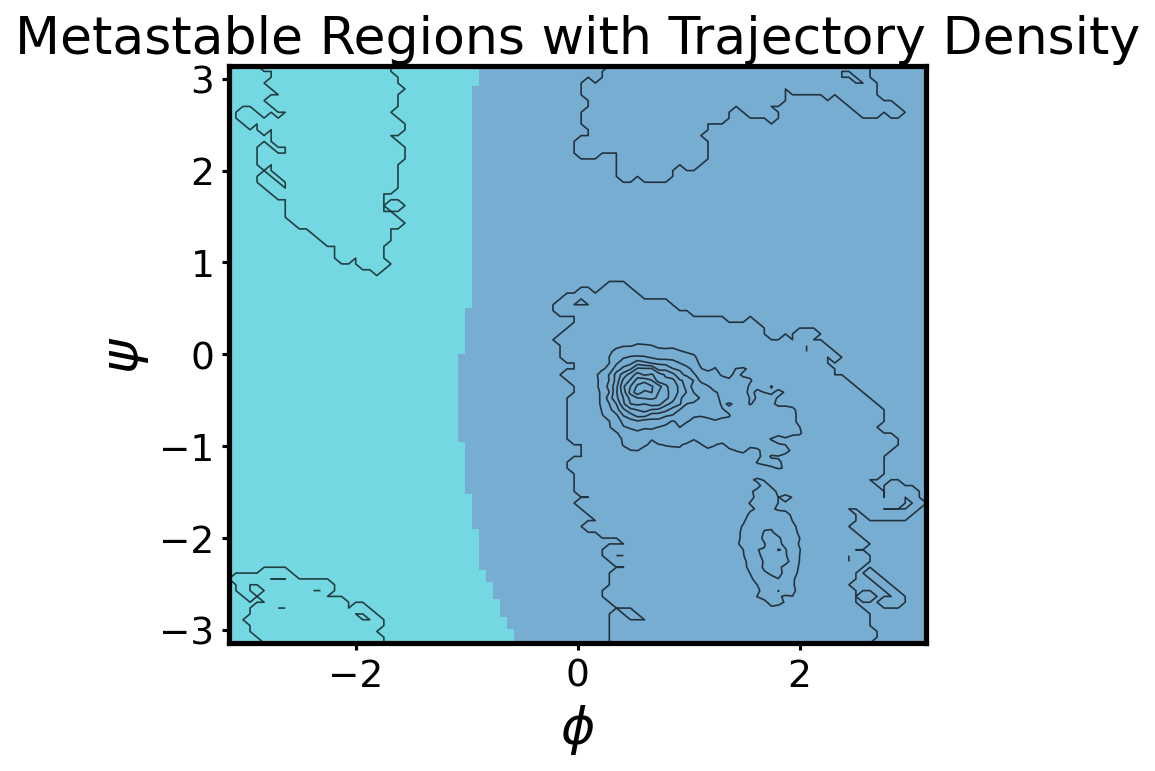}
        \caption{}
    \end{subfigure}
    \hfill
    \begin{minipage}[b]{0.4\textwidth}
        \centering
        \caption{The plots a) and b) display trajectories colored by their assigned basin from the trained NBI. Plot a) corresponds to a model trained only on the 2-d projection of the trajectories whereas plot b) corresponds to clustering according to a model trained on the full 66 dimensional system. Plot c) displays contours for the density of the trajectories plotted above it as well as a coloring denoting the partitioning of the space according to basins learned by the model trained on the 2d system.}
        \label{alanine_dipeptide_plot}
        \vspace{1cm} 
    \end{minipage}
\end{figure}

\subsection{Additional Tables and Figures}

\vspace{6mm}

\centering
    \label{appendix:embedded_results}
    \begin{tabular}{lcccccc} 
        \toprule
        \multirow{2}{*}{\textbf{Method}} & \multicolumn{2}{c}{\textbf{2d Double Ring}} & \multicolumn{2}{c}{\textbf{2d GM}} & \multicolumn{2}{c}{\textbf{3d Helix}} \\
        \cmidrule(lr){2-3} \cmidrule(lr){4-5} \cmidrule(lr){6-7}
        & $D=2$ & $D=100$ & $D=2$ & $D=100$ & $D=3$ & $D=100$ \\
        \midrule
        NBI (Ours)   & \textbf{1.0} $\pm 0.0$ & \textbf{1.0} $\pm 0.00$& \textbf{1.0} $\pm 0.00$ & \textbf{1.0}  $\pm 0.00$ & \textbf{1.0} $\pm 0.00$& \textbf{0.90} $\pm 0.30$\\
        VampNet & 0.33 $\pm0.31 $ & 0.58 $\pm 0.31$ & 0.80 $\pm 0.12$ & 0.66 $\pm 0.24$& 0.18 $\pm 0.17 $& 0.01 $\pm 0.01$\\
        HDBSCAN & \textbf{1.0} $\pm 0.00$& 0.10 $\pm 0.01$& \textbf{1.0} $\pm 0.00$ & 0.00  $\pm 0.00$ & \textbf{1.0} $\pm 0.00$& 0.13 $\pm 0.00$\\
        MSM  & 0.88 $\pm 0.15$ & 0.14 $\pm 0.13$ & 0.79  $\pm 0.33$ & \textbf{1.0}  $\pm 0.00$ & 0.05 $\pm 0.05$ & 0.00 $\pm 0.00$\\

        \bottomrule
    \end{tabular}
\centering
Reported NMI Across Synthetic Low and (embedded) High Dimensional Processes

\begin{figure}[H]
\centering

\begin{subfigure}{0.23\textwidth}
    \centering
    \includegraphics[width=\linewidth]{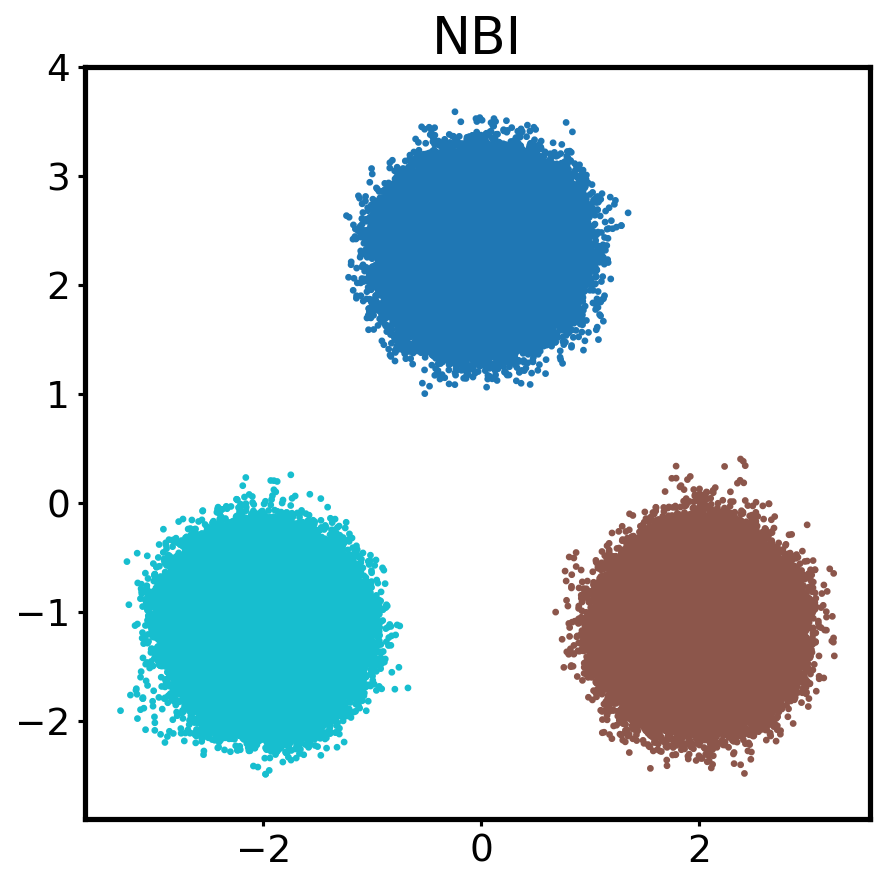}
    \caption{}
\end{subfigure}
\hfill
\begin{subfigure}{0.23\textwidth}
    \centering
    \includegraphics[width=\linewidth]{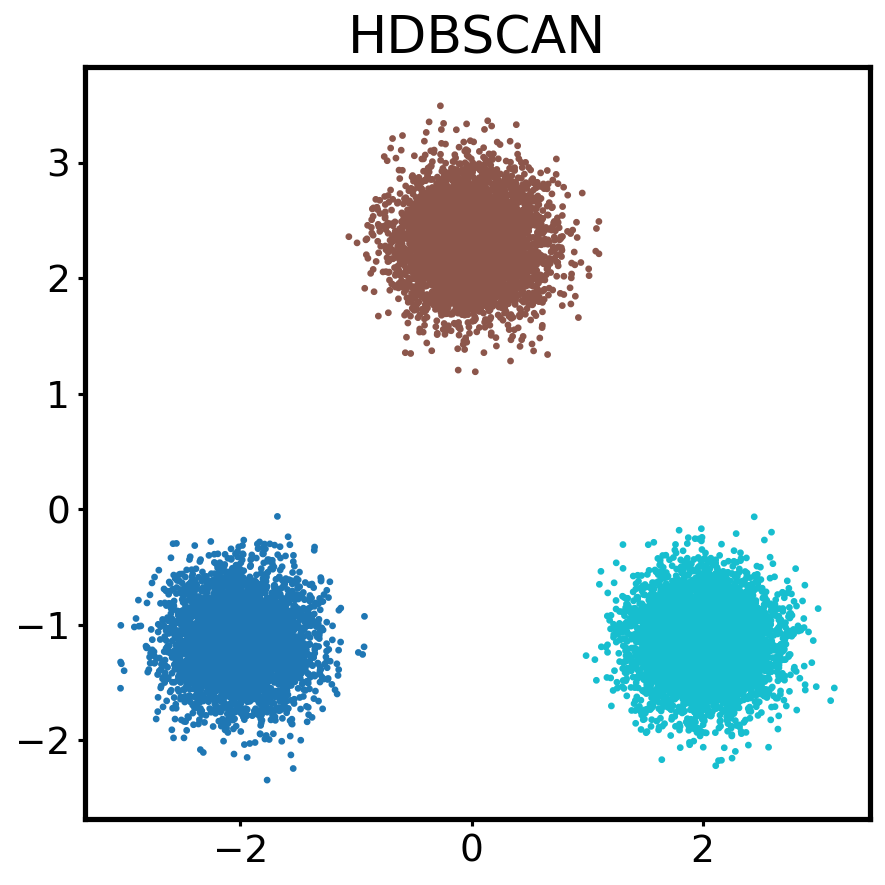}
    \caption{}
\end{subfigure}
\hfill
\begin{subfigure}{0.23\textwidth}
    \centering
    \includegraphics[width=\linewidth]{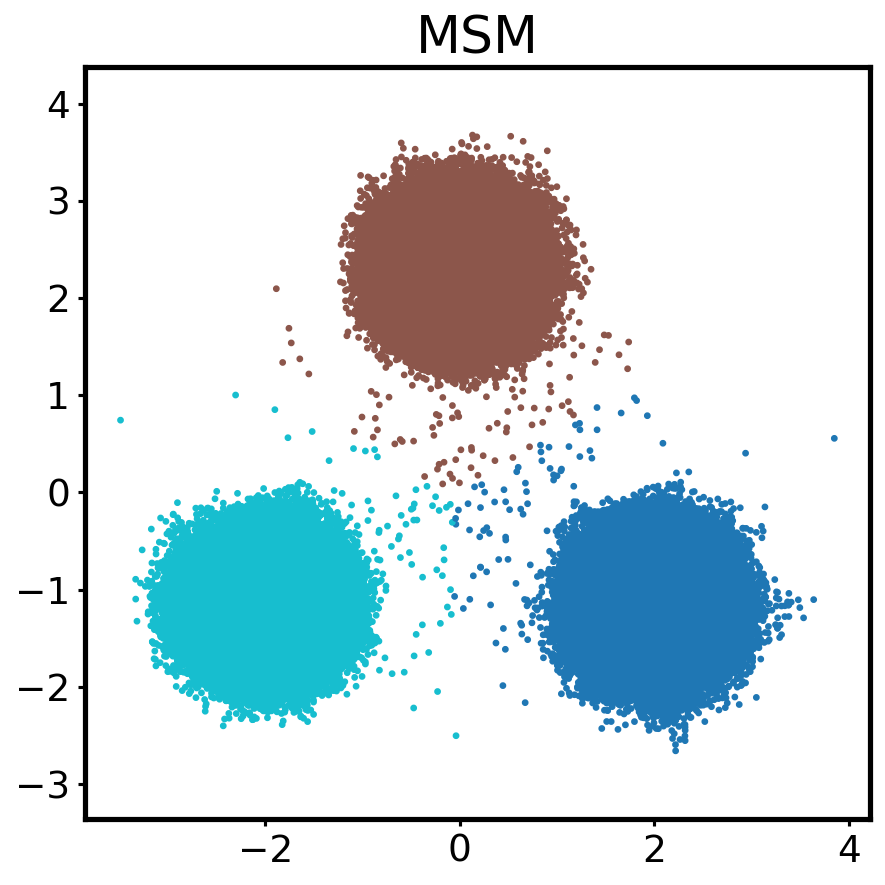}
    \caption{}
\end{subfigure}
\hfill
\begin{subfigure}{0.23\textwidth}
    \centering
    \includegraphics[width=\linewidth]{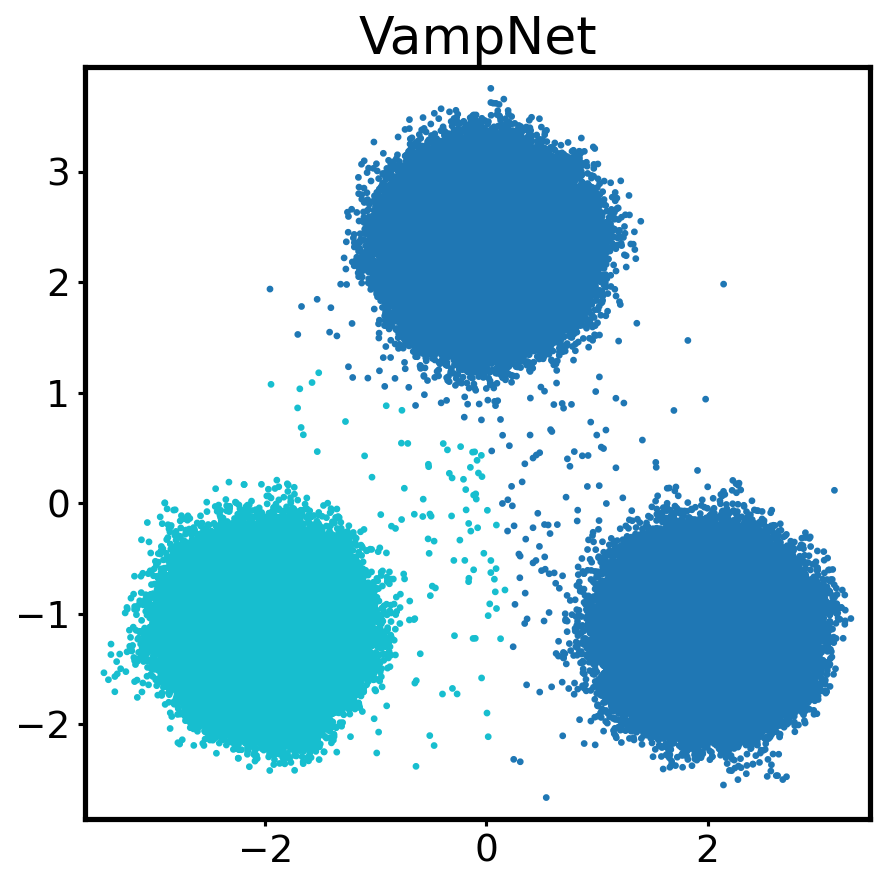}
    \caption{}
\end{subfigure}

\vspace{0.5em}

\begin{subfigure}{0.23\textwidth}
    \centering
    \includegraphics[width=\linewidth]{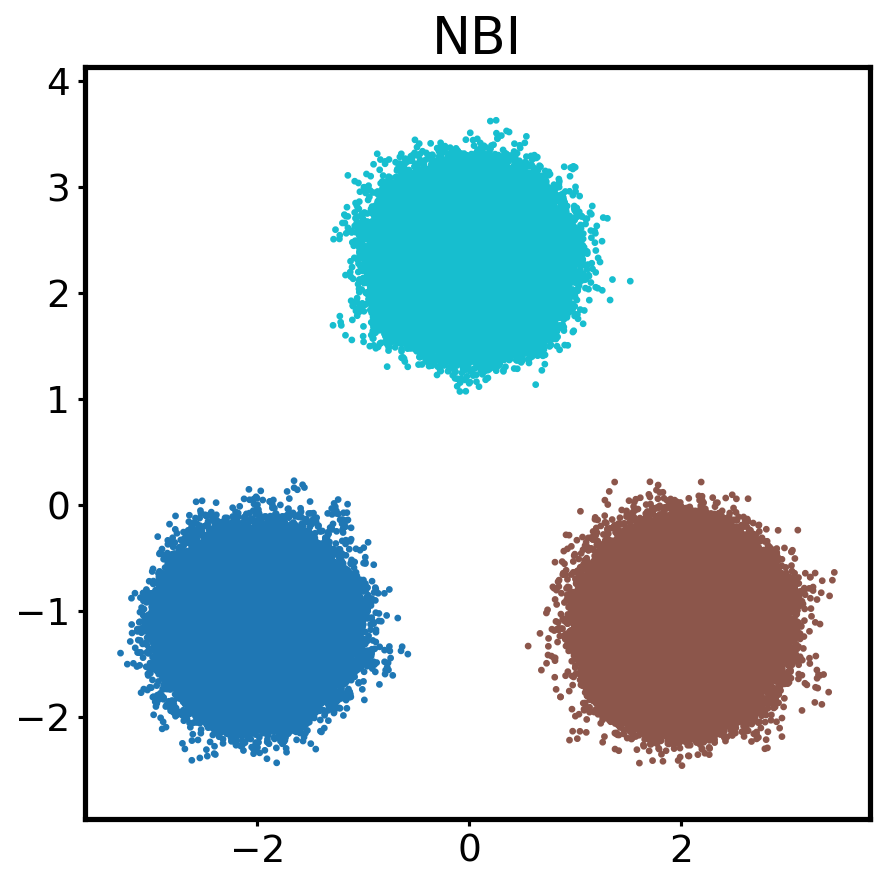}
    \caption{}
\end{subfigure}
\hfill
\begin{subfigure}{0.23\textwidth}
    \centering
    \includegraphics[width=\linewidth]{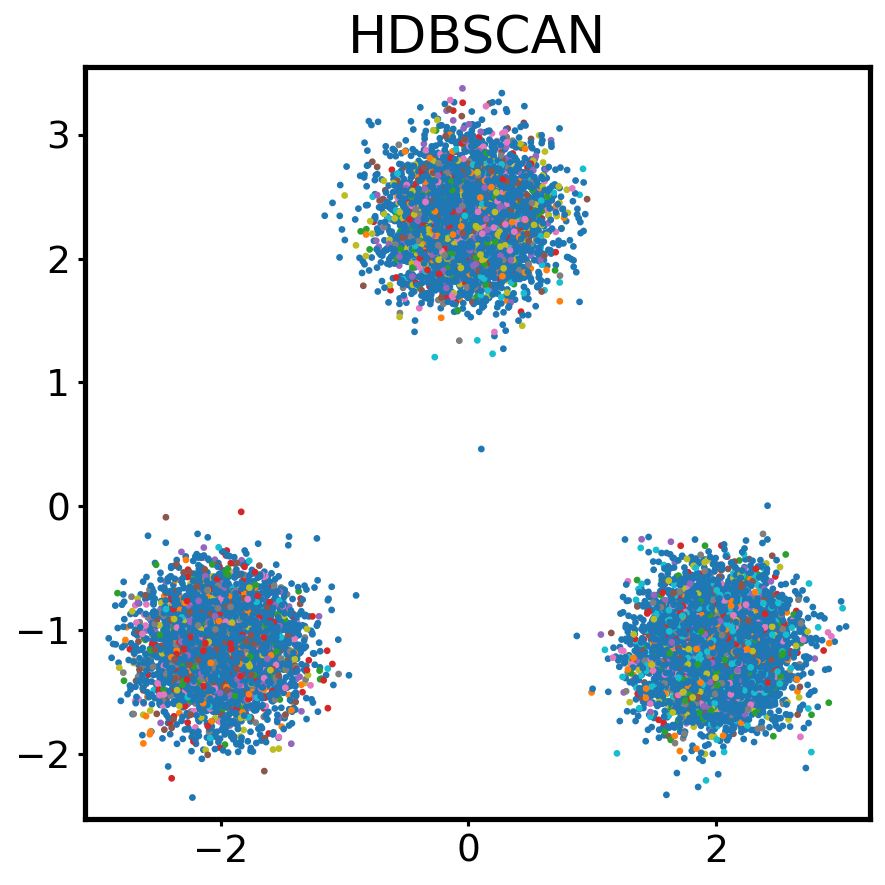}
    \caption{}
\end{subfigure}
\hfill
\begin{subfigure}{0.23\textwidth}
    \centering
    \includegraphics[width=\linewidth]{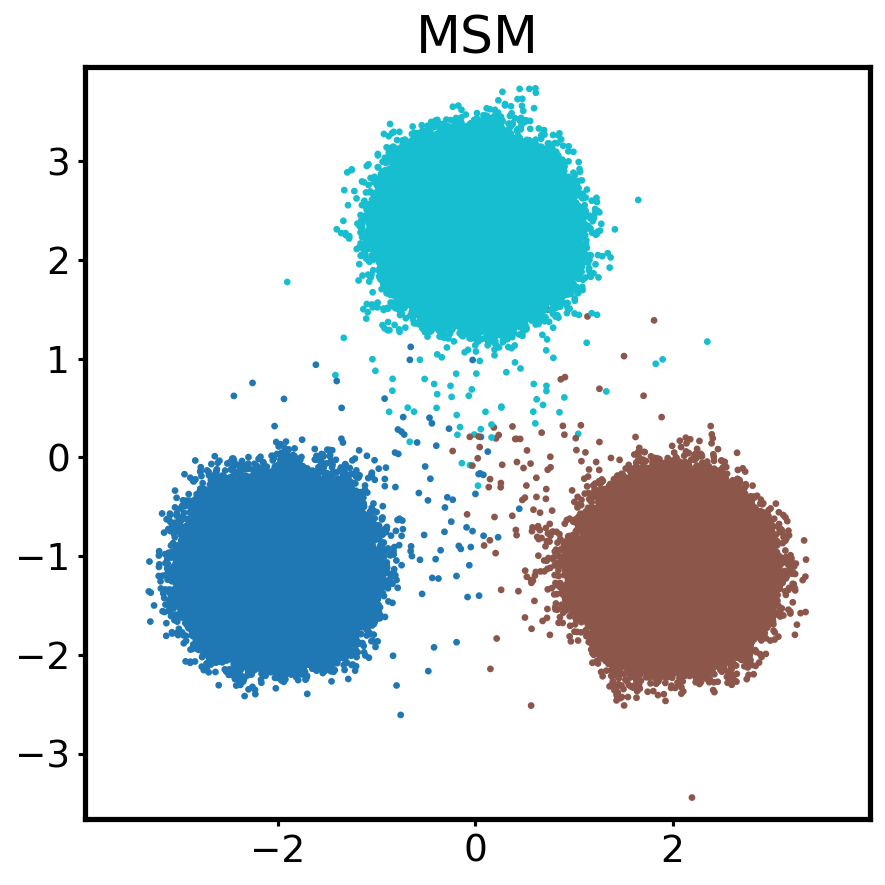}
    \caption{}
\end{subfigure}
\hfill
\begin{subfigure}{0.23\textwidth}
    \centering
    \includegraphics[width=\linewidth]{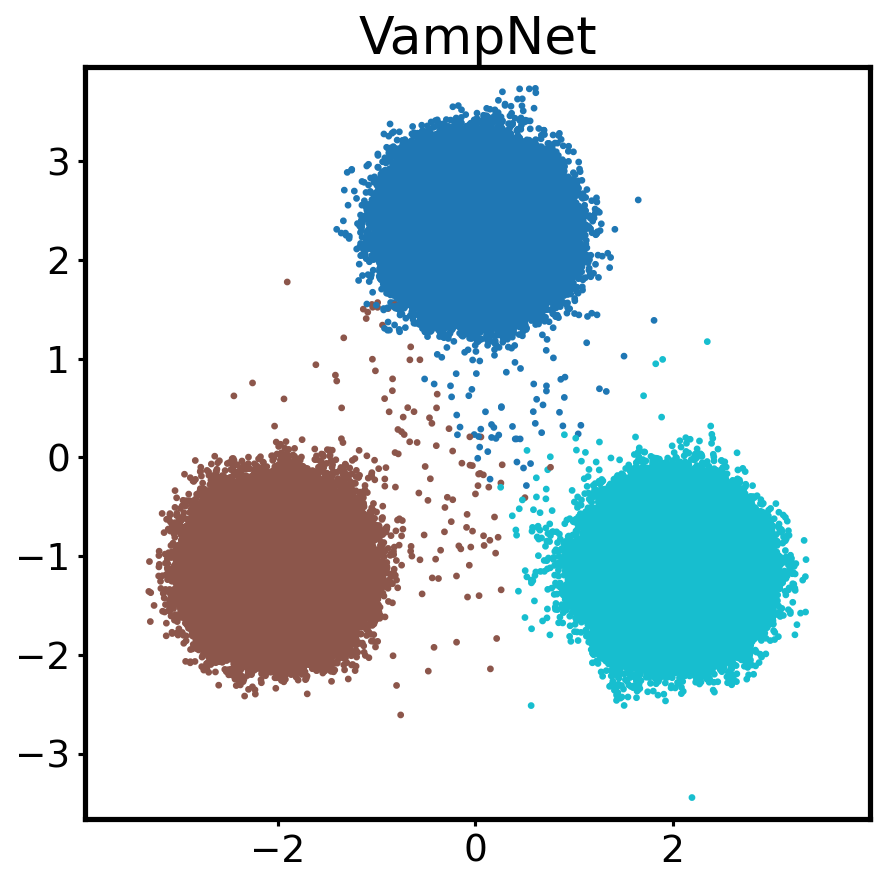}
    \caption{}
\end{subfigure}

\caption{The figure contains trajectories plotted by cluster label for various different methods. The top row corresponds to the MALA process running on a simple 2d Gaussian Mixture energy function composed of two 3 isotropic Gaussians. The bottom row corresponds to the same structure embedded in 100 dimensional ambient space. In this case, we plot only the low dimensional subspace in which the separability of the basins can be visualized. Note that the results of the experiments can have high variance and thus the images presented do not represent the clustering that will be obtained for every experimental run but one randomly selected sample from our experiments. The values reported in our tables are means (along with standard deviation) computed over 10 experimental runs. Some methods may produce a correct clustering on some occasions but not others.} 
\label{gm2d_plot}
\end{figure}

\begin{figure}[H]
\centering

\begin{subfigure}{0.45\textwidth}
    \centering
    \includegraphics[width=\linewidth]{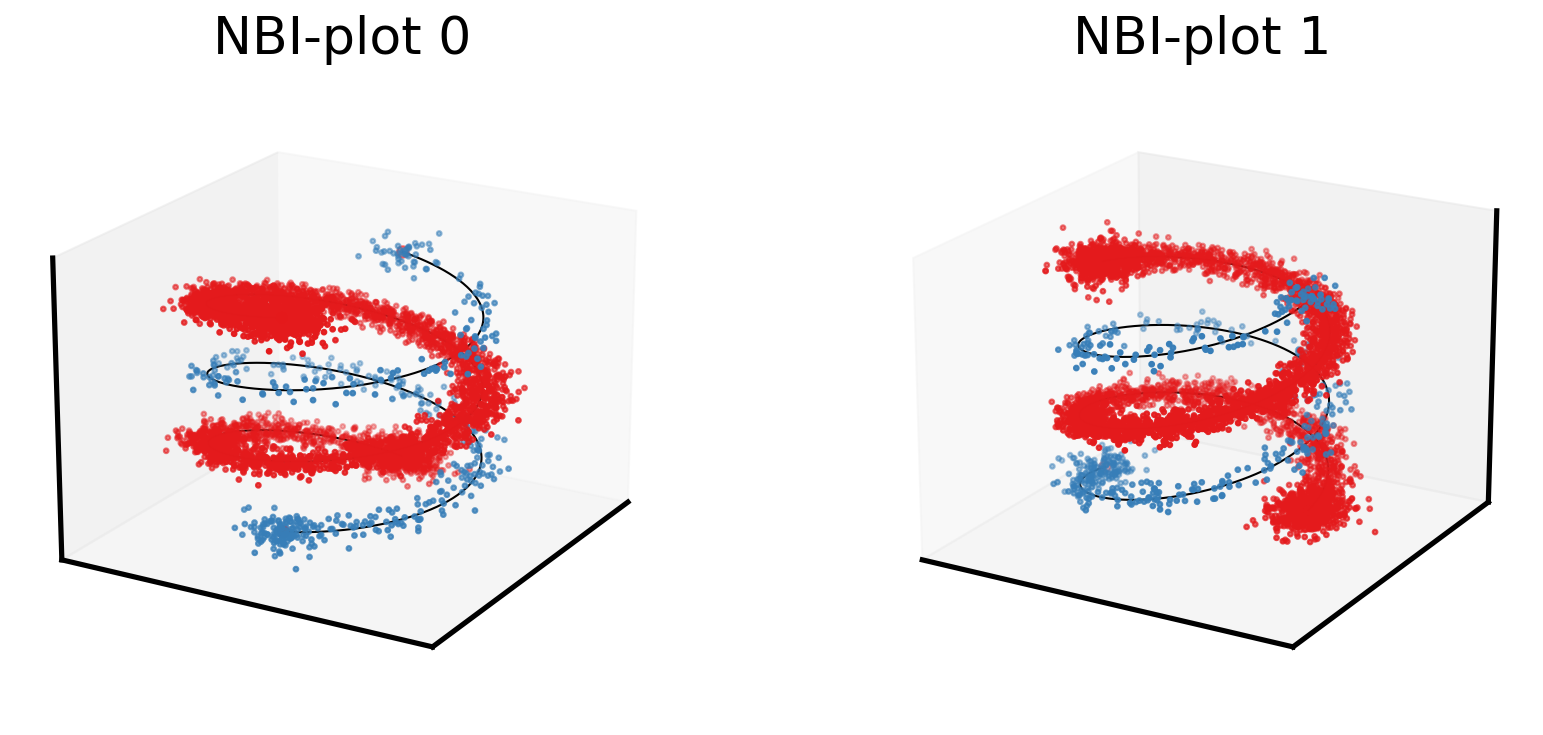}
    \caption{}
\end{subfigure}
\hfill
\begin{subfigure}{0.45\textwidth}
    \centering
    \includegraphics[width=\linewidth]{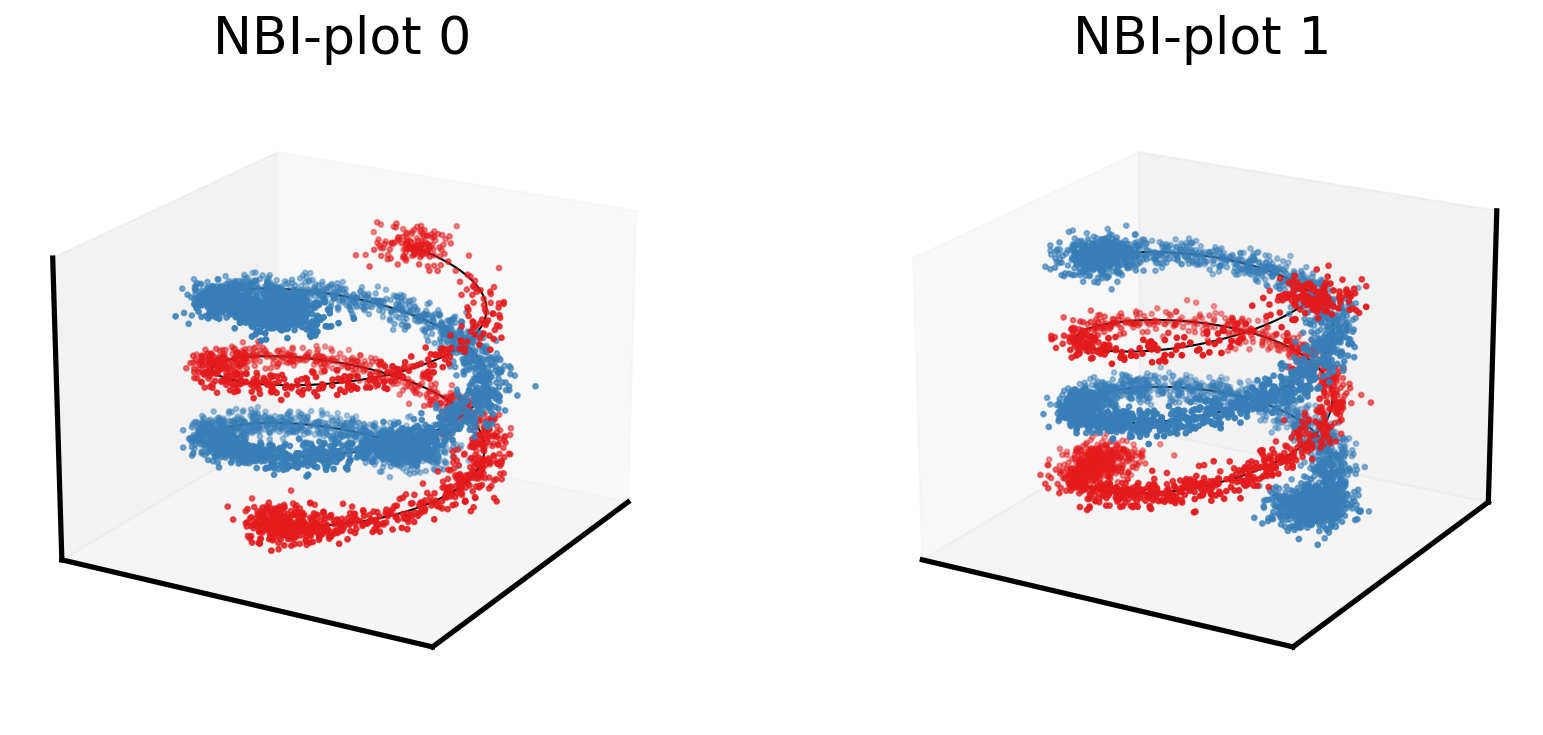}
    \caption{}
\end{subfigure}
\hfill
\begin{subfigure}{0.45\textwidth}
    \centering
    \includegraphics[width=\linewidth]{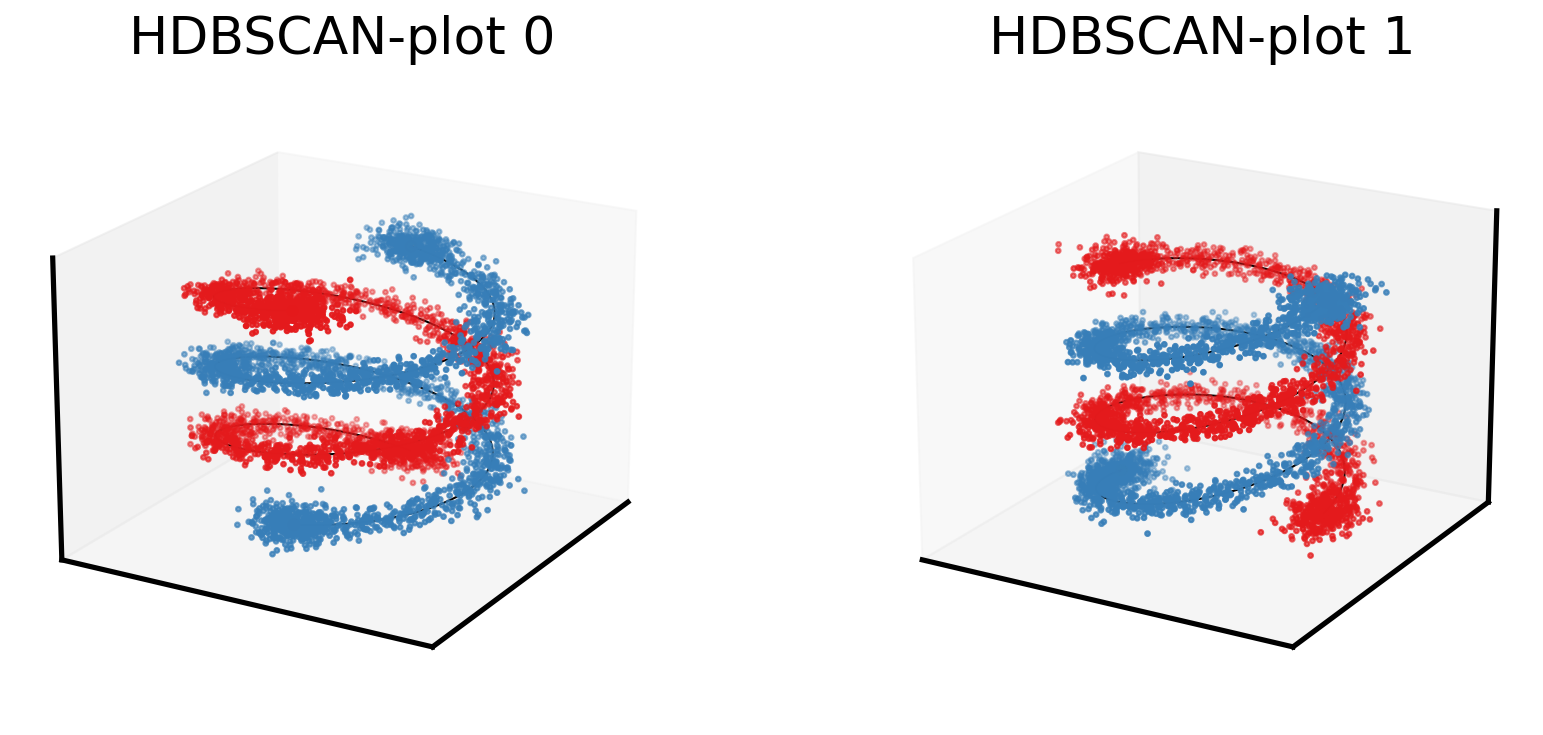}
    \caption{}
\end{subfigure}
\hfill
\begin{subfigure}{0.45\textwidth}
    \centering
    \includegraphics[width=\linewidth]{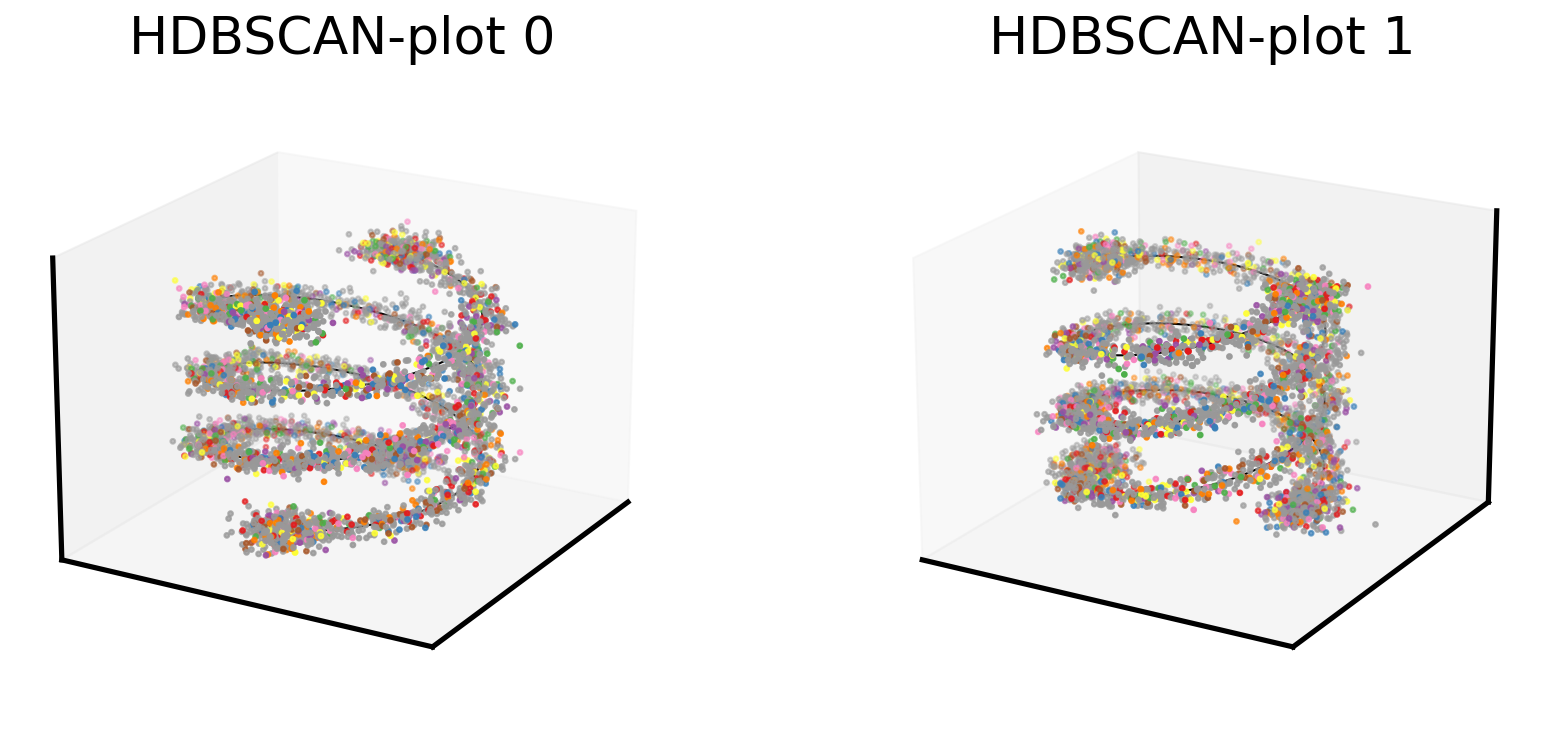}
    \caption{}
\end{subfigure}

\vspace{0.5em}

\begin{subfigure}{0.45\textwidth}
    \centering
    \includegraphics[width=\linewidth]{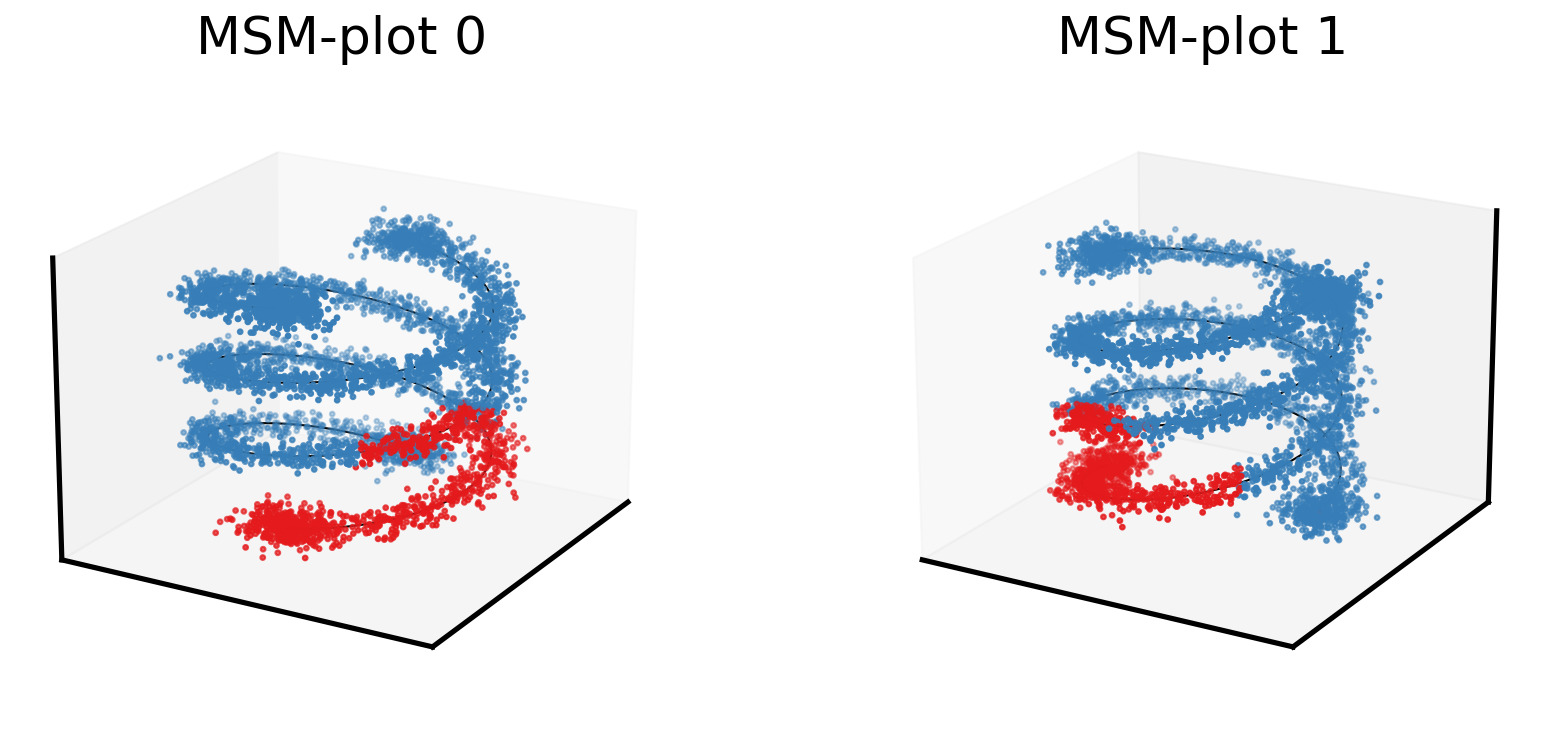}
    \caption{}
\end{subfigure}
\hfill
\begin{subfigure}{0.45\textwidth}
    \centering
    \includegraphics[width=\linewidth]{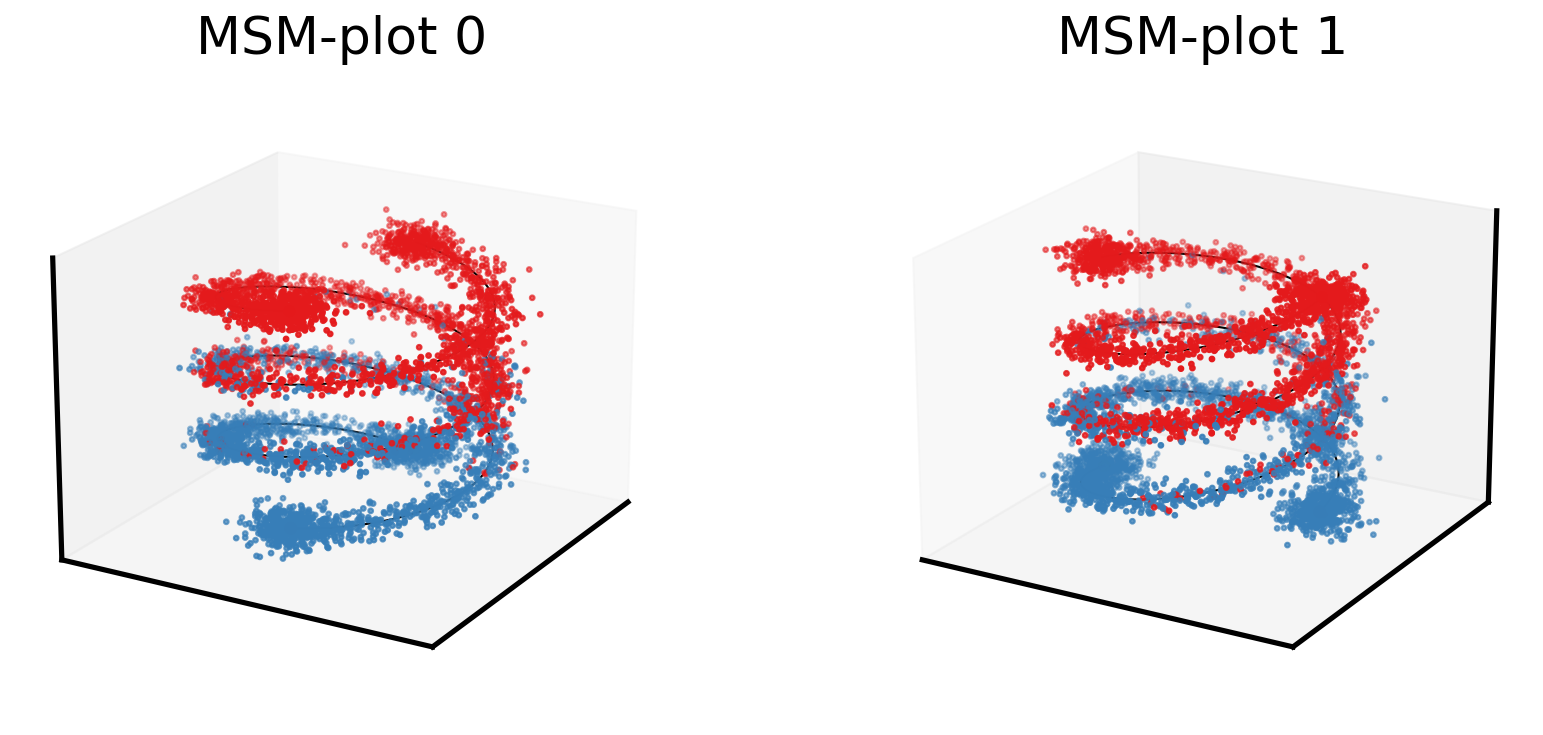}
    \caption{}
\end{subfigure}
\hfill
\begin{subfigure}{0.45\textwidth}
    \centering
    \includegraphics[width=\linewidth]{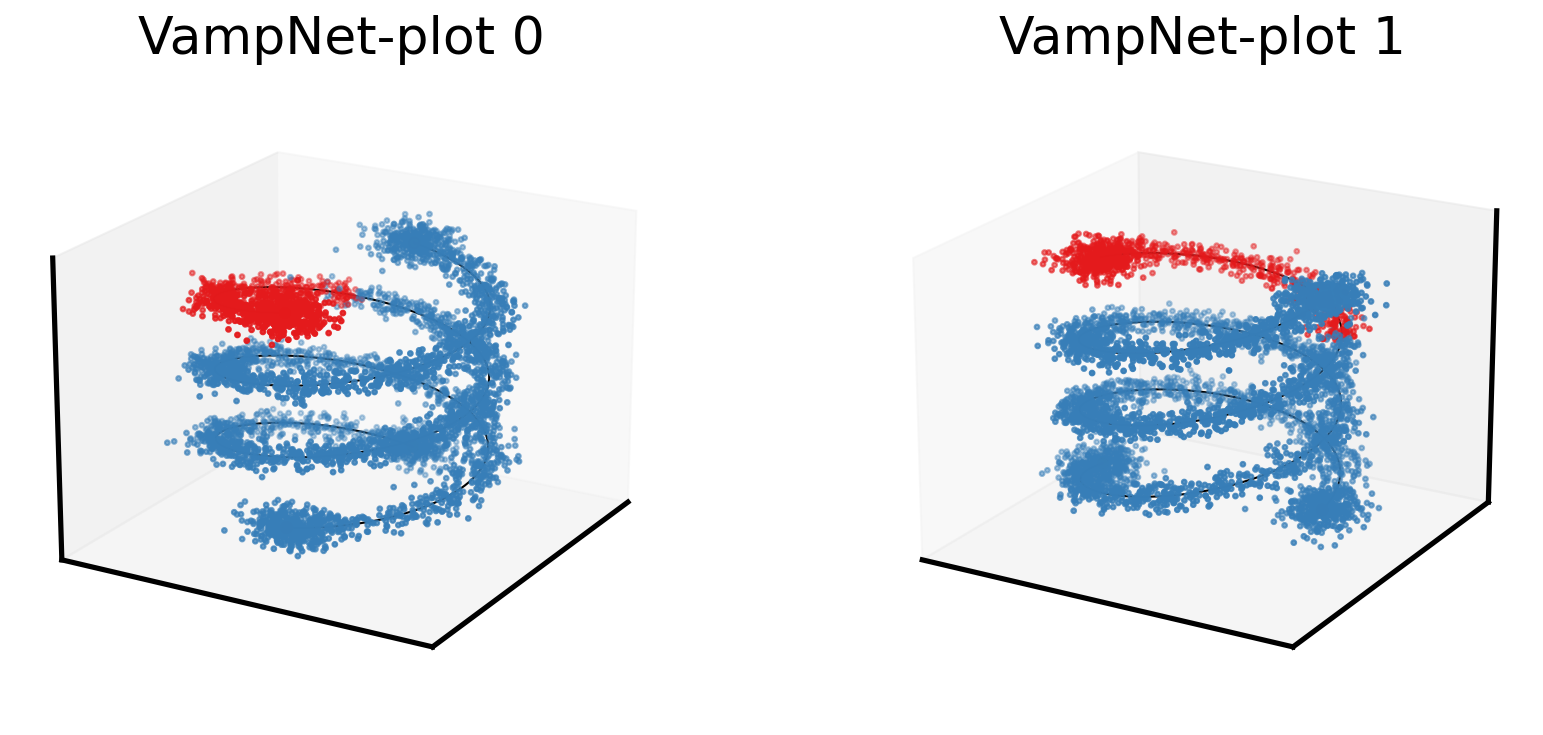}
    \caption{}
\end{subfigure}
\hfill
\begin{subfigure}{0.45\textwidth}
    \centering
    \includegraphics[width=\linewidth]{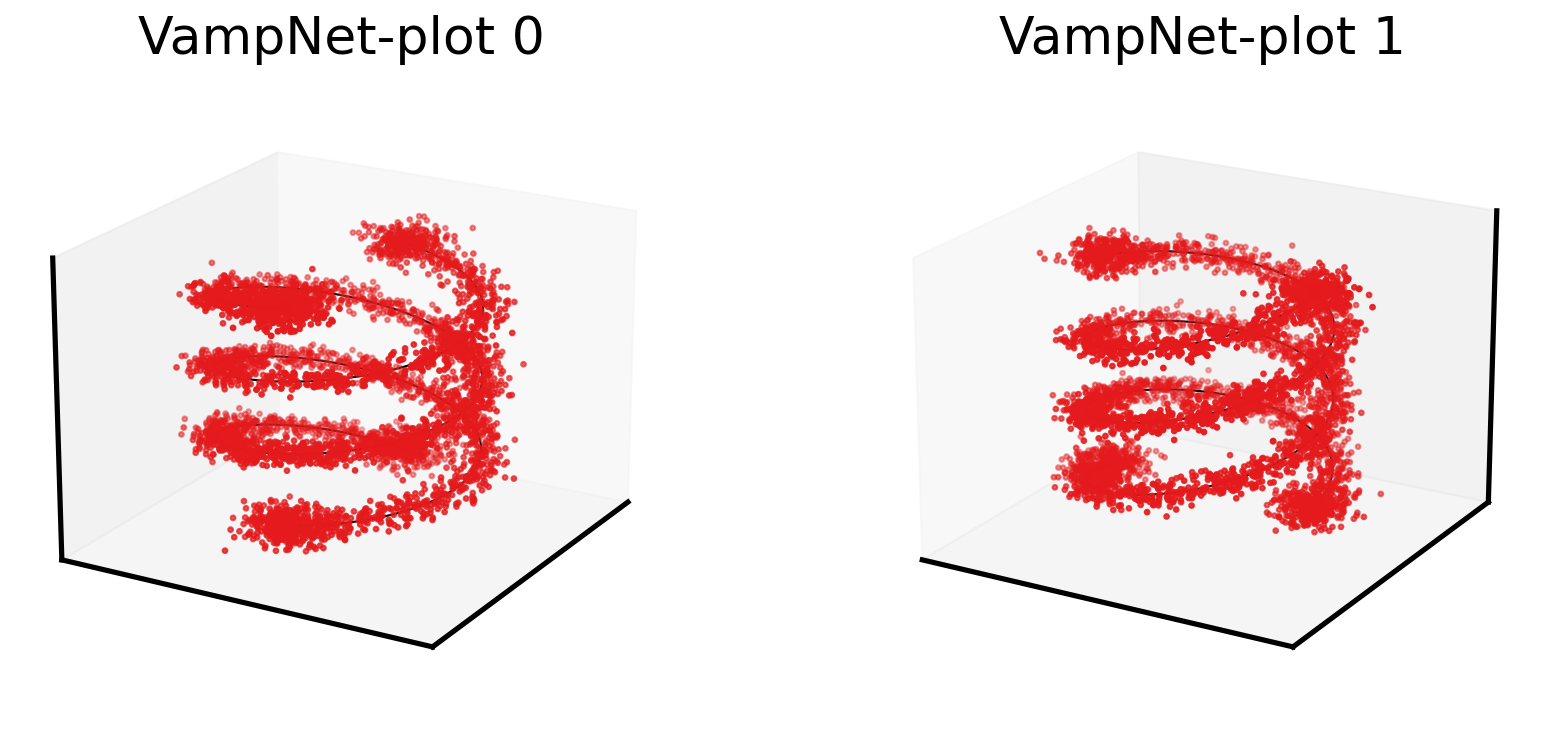}
    \caption{}
\end{subfigure}
\caption{The figure contains trajectories plotted by cluster label for various different methods. The left corresponds to the MALA process running on a 3d 'helix' energy function. The right column corresponds to the same structure embedded in 100 dimensional ambient space. In this case, we plot only the low dimensional subspace in which the separability of the basins can be visualized.}
\label{ring_plot}
\end{figure}

\end{document}